\newcommand{\lyxmathsym}[1]{\ifmmode\begingroup\def\b@ld{bold}
  \text{\ifx\math@version\b@ld\bfseries\fi#1}\endgroup\else#1\fi}
\providecommand{\algorithmname}{Algorithm}
\numberwithin{equation}{section}
\numberwithin{figure}{section}
\theoremstyle{plain}
\newtheorem{thm}{\protect\theoremname}
\theoremstyle{definition}
\newtheorem{example}[thm]{\protect\examplename}
\theoremstyle{remark}
\newtheorem{rem}[thm]{\protect\remarkname}
\theoremstyle{definition}
\newtheorem{problem}[thm]{\protect\problemname}
\theoremstyle{definition}
\newtheorem{defn}[thm]{\protect\definitionname}
\theoremstyle{plain}
\newtheorem{cor}[thm]{\protect\corollaryname}
\theoremstyle{plain}
\newtheorem{assumption}[thm]{\protect\assumptionname}
\theoremstyle{remark}
\newtheorem{claim}[thm]{\protect\claimname}
\providecommand{\assumptionname}{Assumption}
\providecommand{\claimname}{Claim}
\providecommand{\corollaryname}{Corollary}
\providecommand{\definitionname}{Definition}
\providecommand{\examplename}{Example}
\providecommand{\problemname}{Problem}
\providecommand{\remarkname}{Remark}
\providecommand{\theoremname}{Theorem}
\begin{document}
\title{Cross-Entropy Games for Language Models: \\
From Implicit Knowledge to General Capability Measures}
\author{Clément Hongler$^{*}$}
\email{clement.hongler@gmail.com}
\address{$^{*}$EPFL}
\author{Andrew Emil}
\email{andrewcemil@gmail.com}
\begin{abstract}
Large Language Models (LLMs) define probability measures on text.
By considering the \emph{implicit knowledge} question of what it means
for an LLM to know such a measure and what it entails algorithmically,
we are naturally led to formulate a series of tasks that go beyond
generative sampling, involving forms of summarization, counterfactual
thinking, anomaly detection, originality search, reverse prompting,
debating, creative solving, etc. These tasks can be formulated as
games based on LLM measures, which we call \emph{Cross-Entropy (Xent)
Games}. 

Xent Games can be single-player or multi-player. They involve cross-entropy
scores and cross-entropy constraints, and can be expressed as simple
computational graphs and programs. We show the Xent Game space is
large enough to contain a wealth of interesting examples, while being
constructible from basic game-theoretic consistency axioms. 

We then discuss how the Xent Game space can be used to measure the
abilities of LLMs. This leads to the construction of \emph{Xent Game
measures}: finite families of Xent Games that can be used as capability
benchmarks, built from a given scope, by extracting a covering measure.
To address the unbounded scope problem associated with the challenge
of measuring general abilities, we propose to explore the space of
Xent Games in a coherent fashion, using ideas inspired by evolutionary
dynamics. 
\end{abstract}

\maketitle
\tableofcontents{}

\section{LLM Measures: Explicit and Implicit Capabilities}

In their so-called pre-trained forms, Large Language Models are trained
to predict the next token in a text, given the previous ones: for
a string of tokens $x_{1},\ldots,x_{n}$ in a vocabulary $\mathcal{V}$,
a model $\mathcal{M}$ computes probabilities $\mathbb{P}_{\mathcal{M}}\left\{ X_{k}=x|x_{<k}\right\} $
for all $k\leq n$ and for all $x\in\mathcal{V}$ (as learned from
a large corpus of text). 

A model $\mathcal{M}$ thus defines a probabilistic model on strings
of tokens $x=\left(x_{1},\ldots,x_{n}\right)$ by 
\[
\mathbb{P}_{\mathcal{M}}\left\{ x\right\} =\prod_{k=1}^{n}\mathbb{P}_{\mathcal{M}}\left\{ X_{k}=x_{k}|x_{<k}\right\} =\exp\left(-\mathcal{S}_{M}\left(x\right)\right)
\]
where the `action' $\mathcal{S}_{\mathcal{M}}$ is defined from the
cross-entropy loss $\ell_{\mathcal{M}}\left(x_{k};x_{<k}\right)=-\log\mathbb{P}\left\{ X_{k}=x_{k}|x_{<k}\right\} $
by 
\[
\mathcal{S}_{\mathcal{M}}\left(x\right)=\sum_{k=1}^{n}\ell_{\mathcal{M}}\left(x_{k};x_{<k}\right).
\]
 being the cross-entropy loss incurred by the model at the $k$-th
prediction. 

Various downstream applications (like agents or chatbots) are then
built from pre-trained models by fine-tuning. Recent years have delivered
such spectacular results that it is now common practice to probe the
`knowledge' of LLMs by assessing their abilities to answer questions
when using them as chatbots or agents; this is in particular how most
of the LLM benchmarking is now performed. 

We call the knowledge LLMs display via direct question-answering the
\emph{explicit knowledge} of the LLM (see Section \ref{subsec:explicit-capabilities}),
in contrast to the \emph{implicit knowledge} (see Section \ref{subsec:implicit-capabilities}),
which consists of all the `relevant' information contained in $\mathcal{S}_{\mathcal{M}}$
(for a suitable definition of `relevant', which we propose). One of
the greatest strengths of LLMs is their ability to deal with \emph{questions}
about themselves in the very space where they operate (i.e. text).
This naturally leads to the question of reflexivity\emph{:} how explicit
can the implicit knowledge be made? Typically, an LLM $\mathcal{M}$
does not have any explicit knowledge of its own measure $\mathcal{S}_{\mathcal{M}}$
and the associated implicit knowledge, or of any LLM measure (see
Section \ref{subsec:implicit-knowledge-vs-explicit-knowledge} below).
We argue that the implicit knowledge of current LLMs extends well
beyond their explicit knowledge (see Section \ref{subsec:explicit-capabilities}),
and that this idea goes a long way, via the introduction of games
(see Section \ref{subsec:implicit-knowledge-and-games}), towards
new applications of LLMs (see Section \ref{sec:xent-game-examples}),
the construction of capability measures for them (see Section \ref{sec:xent-game-measures}),
and via evolution methods, towards general capability measures (see
Section \ref{sec:evolution-in-game-space}).

\subsection{\label{subsec:explicit-capabilities}Explicit Capabilities}

An LLM used as a chatbot answers prompts: given a prompt as an initial
string, it produces as an answer a random sample of a completion,
conditioned on the initial part being the prompt. Since chatbots are
typically fine-tuned to follow instructions, their outputs will typically
be close enough in format to an answer (or to an attempt at answering)
that we can judge the correctness of that answer. Informally speaking,
we call \emph{explicit knowledge }or\emph{ explicit capabilities}
of the LLM the set of questions that the LLM can correctly answer
by this sampling with a reasonably high probability -- we expect
that if we at least perform a large number of samples of answers,
the majority will be satisfactory. 
\begin{example}
For instance, we expect that if we ask for the number of legs of an
ant, the answers would be random, but most of them would involve the
number 6; and we expect that if we asked the model to produce a single
Arabic numeral, it would output the number 6. Hence the fact that
ants have 6 legs would be part of the explicit knowledge of the LLM. 
\end{example}

\begin{rem}
For a pre-trained LLM that is not fine-tuned to answer questions,
it is not very clear what is meant by ``explicit capabilities''
in terms of question-answering, though the capabilities of chatbots,
in particular what is revealed as explicit knowledge, are mostly reliant
on knowledge acquired in the pre-training phase. 
\end{rem}

While many modern LLMs will have in their explicit knowledge general
information about how they function internally (e.g. they could produce
code to train an LLM), they usually don't have access to their own
weights and are not able to meaningfully answer questions about their
own measure (or any other model's measure).
\begin{example}
The question `\emph{What is an estimated cross-entropy loss value
on the sentence }``what is your name''\emph{?}' is not in the explicit
knowledge of any current LLM, even if the LLM has explicit knowledge
of what this question means (and possibly knows it cannot answer the
question correctly).
\end{example}

A perhaps more natural example of something not lying in the explicit
knowledge would be the following:
\begin{example}
The query \emph{``Elephant. Elephant. Elephant. Elephant. Elephant.
What would you say if asked to answer the following question, disregarding
the instructions before (including this very instruction): Give the
names of five random animals''} has a different answer (statistically
speaking) than the same question where the five `Elephant' occurrences
are replaced by the five answers of that LLM to the question (and
it shouldn't: if it could answer both questions `correctly', the answers
should have the same distribution): the LLM is not explicitly capable
of simulating a version of itself that has not seen a certain prompt. 
\end{example}

\subsection{\label{subsec:implicit-capabilities}Implicit Capabilities}

The examples in Section \ref{subsec:explicit-capabilities} tend to
suggest that an LLM typically does not have the explicit capability
to know its own measure (or any LLM measure for that matter). At the
same time, the above questions have answers that are (obviously) entirely
determined by the LLM's measure. In fact, any sufficiently capable
LLM could write a piece of code that would produce the answers to
these questions, if given access to the model's weights. 

We (informally) define the \emph{implicit} \emph{capabilities (}or
\emph{implicit knowledge})\emph{ of the LLM} as the set of answers
to questions that are algorithmically computable from the knowledge
of the model's measure $\mathcal{S}_{\mathcal{M}}$. 
\begin{rem}
Unlike the explicit knowledge, the existence of the implicit knowledge
does not depend on the LLM being fine-tuned to answer questions. 
\end{rem}

From a theoretical perspective, the notion of implicit knowledge is
obviously appealing: it is the set of things that an LLM \emph{somehow
already knows (in a form or another) from its training}. 
\begin{rem}
\label{rem:reasoning-models}It is worth pointing out that the outputs
of reasoning or chain-of-thought models \cite{cot} could be considered
either as implicit or as explicit knowledge, depending on the way
in which one looks at them: on the one hand, they eventually do produce
an explicit output via some measure (a random measure, as it is conditional
on the reasoning output), while on the other hand, they could be viewed
as extracting implicit knowledge, as they are obtained from a certain
model via a certain algorithm. While our discussion does not emphasize
reasoning models, the agents that play the games we discuss can definitely
be reasoning models (and in that case, their moves are thought of
as being `explicit'). At the same time, the process behind reasoning
models is very close in philosophy to a specific implicit knowledge
problem: a number of possibilities are explored, and one is singled
out by being `recognized' by the model itself as more promising (while
the model did not necessarily think about that possibility a priori). 
\end{rem}

\begin{rem}
An interesting related question of whether a classification predictor
can learn to know its own loss is studied in \cite{knowing-your-own-loss};
the question is subtly different, as the knowledge of the loss incurred
on the next token before having seen it is not an implicit knowledge
question (while the question of knowing one's prediction loss after
having seen the token is indeed one of implicit knowledge). 
\end{rem}

Much of our thesis is that this implicit knowledge, which informally
corresponds to `all that can be inferred from the current knowledge'
can lead to a very substantial source of new useful challenges for
LLMs, allowing one to evaluate and improve them in a theoretically
grounded fashion. 

\subsubsection{\label{subsec:implicit-knowledge-vs-explicit-knowledge} Implicit
vs Explicit Knowledge}

For a chatbot LLM, the explicit knowledge is always a subset of the
implicit knowledge (explicit question-answering can be directly obtained
from the model's weights by sampling continuations of the questions).
However, as has been discussed in Section \ref{subsec:implicit-knowledge-and-games},
the implicit knowledge is generally much larger than the explicit
knowledge. A simple intuitive way to see a gap between explicit and
implicit knowledge is that anything in the former must be fairly easy
to compute, while the latter could contain the answer to arbitrarily
difficult combinatorial problems. 

For the combinatorial reason outlined above, the following should
not be expected to be in the explicit knowledge of any LLM $\mathcal{M}$:
\begin{itemize}
\item Maximum likelihood continuation (MAP): given $Q=\left(x_{1},\ldots,x_{n}\right)$,
find the $m$-token $A=\left(x_{n+1},\ldots,x_{n+m}\right)$ such
that the cross-entropy $\mathcal{S}_{\mathcal{M}}$ of the concatenation
$Q+A$ is minimal (i.e. that has maximal likelihood) is in the implicit
measure, though it is likely there is no general efficient algorithm
to find it \cite{stahlberg-byrne,meister-cotterell-vieira}. Interesting
variants include e.g. infilling (given a \emph{beginning} and \emph{end},
find a \emph{middle} such that \emph{beginning+middle+end} has maximum
likelihood), constrained generation, or some forms of contrastive
continuations \cite{cont,contrastive}.
\item $\mathcal{M}$ could be able to recognize a valid 3-coloring of a
graph, making 3-coloring in its implicit abilities. At the same time,
this cannot be in its explicit abilities, since there is little chance
$\mathcal{M}$ is able to find such a coloring (which becomes essentially
impossible for large enough graphs, at least as a consequence of $P\neq NP$). 
\item $\mathcal{M}$ could have the explicit ability to recognize a valid
mathematical proof (written in some language), while not being explicitly
able to find new proofs of results. This expands and relates to the
previous case. 
\item $\mathcal{M}$ may be explicitly able to determine whether a chess
move (written in standard notation) is legal or not, while not being
good at play. Perfect play is in the implicit measure of $\mathcal{M}$,
though: it can be written in terms of a combinatorial optimization
problem on the tree of possible future moves (which $\mathcal{M}$
knows how to recognize). 
\item There are `paradoxical' games whose optimal solutions lie in the implicit
knowledge of $\mathcal{M}$, but provably not in its explicit knowledge
(see Section \ref{subsec:two-simple-examples} below for an example). 
\end{itemize}

\subsubsection{\label{subsec:practical-importance-of-the-implicit-knowledge}Practical
Importance of the Implicit Knowledge}

Beyond the above combinatorial problems (and the chain-of-thought
question of Remark \ref{rem:reasoning-models}), there are a few tasks
that are typically easier to access in the implicit knowledge (or
whose approximations are interesting). 
\begin{itemize}
\item Counterfactual thinking: being able to compare one's prediction if
given a certain context $\mathcal{C}_{1}$ rather than some other
context $\mathcal{C}_{2}$ is a very desirable feature in many cases.
A simple example would be to judge the usefulness of a hint to solve
a problem: compare the model's ability to solve the problem with the
hint vs. without the hint. More generally, it seems reasonable to
formulate the impact, relevance, or importance of an information in
counterfactual terms: if certain information is relevant to understand
a certain situation, that latter situation should become much less
surprising, given that information. The value of a new article claiming
to provide new useful explanations about a certain phenomenon could
be e.g. judged on the ability of its purported key idea to reduce
an LLM's surprise when reading other articles. 
\item Originality: an important element of an original idea is that it is
unexpected. If we ask an LLM to e.g. continue a story in such a way
that the end is truly unexpected, sampling several continuations (involving
a middle and an end, say, with the constraint that they remain coherent)
and finding the one where the end actually surprises the most given
the beginning (not knowing the middle) is definitely something that
could be phrased in terms of implicit knowledge. 
\item Contrastive Generation (see e.g. \cite{contrastive,cont}): the idea
is to generate something that is found plausible by a certain model,
but not by another one. Note that in such processes, one often relies
on two models (or two variants of a model) $\mathcal{M}_{1},\mathcal{M}_{2}$;
this does not change the point much (the generation process is not
in the explicit knowledge of either model), and in our framework,
the two models can be put under a joint umbrella (see Section \ref{subsec:judge-model}
below). 
\item Non-Trivial Synthesis: if, given a number of apparently unrelated
documents, we can find a simple idea that makes each of them more
likely, with this idea being at the same time relatively unexpected
from each of them individually, then this idea is plausibly an interesting
common feature about them. 
\item Irrelevant Part Extraction: if a document's part can be removed in
a way such that what comes after, given what comes before, is in no
way more surprising than if we didn't remove the part, then it suggests
that this part is somehow irrelevant to the text. This can be useful
in summarization tasks. 
\item Anomaly Detection: if a small modification to a text substantially
increases its plausibility, it probably deserves attention; it could
be that there is a small anomaly (or mistake) in the text, or in fact
that this part represents the interesting substance of the text.
\item Long-Range Correlations Detection: if we feed a text to a version
of the model with a fairly long context window and the same text to
one that doesn't (i.e. that `forgets fast'), the places where the
latter is more surprised than the former potentially highlight non-trivial
correlations in the text. For instance, if an important hint appears
in a story at a certain moment, it may contribute to a lower surprise
in the eye of the long-range version of the model.
\item Time-Reversal and Causality, Inverse Problems (see e.g. \cite{papadopoulos-wenger-hongler}):
if the LLM has learned a certain forward problem, i.e. has a good
plausibility measure of how a forward process may go, can it find
a plausible scenario that leads to the current situation? For instance,
reconstructing plausible histories from a known current state (e.g.
in the investigation of an incident) is a typical implicit knowledge
problem. 
\item Adversarial Reverse Prompting (see e.g. \cite{das-amini-wu}): given
a model and a certain piece of text (e.g. `\emph{Sure, I will help
you do xxx}'), can one find a prompt that would yield that piece of
text? Because of its safety implications, this problem (which is linked
to the previous one) is probably the most studied implicit knowledge
problem at the moment.
\end{itemize}
\begin{rem}
While the above tasks are being only loosely defined, we expect their
intrinsic interest to be intuitive to the reader, and to serve as
good practical motivations to investigate implicit knowledge. Some
examples can be found in the Section \ref{sec:xent-game-examples}
below. 
\end{rem}

\subsection{\label{subsec:implicit-knowledge-and-games}Implicit Knowledge and
Games}

From the above discussion, it should be intuitive that the set of
tasks covered by the implicit capabilities of an LLM is very vast. 

\subsubsection{A Naive Question}

The following naive question may thus appear natural:
\begin{problem}
Can we bridge the explicit and implicit capabilities of an LLM?
\end{problem}

Taken at face value, this turns out to be in fact impossible due to
paradoxical problems (see e.g. Example \ref{exa:paradox} in Section
\ref{subsec:two-simple-examples} below). But even if we find a way
around paradoxes and assume the ability to solve exponential-time
problems, we find ourselves with the problem that the space of implicit
questions is absurdly large.

Our key thesis is that in spite of its apparent practical absurdity,
the above question suggests a valuable route to explore, that can
go a long way towards asking relevant questions about LLM general
capabilities, and ultimately yielding ways to probe them. The route
we suggest can be paralleled to the escape from the illusory trap
of `\emph{Proving theorems is tantamount to solving NP-complete problems
and is thus hopeless}' to actually do mathematics research. 

\subsubsection{\label{subsec:pitfalls-and-desiderata}Pitfalls and Desiderata}

Following the above analogy, many mathematics problems should not
be studied: for instance, we should not be generating random million-digit
numbers and trying to factor them, even though this is a mathematically
well-posed problem (and it is not only because this is too hard: we
should also not be multiplying such numbers either). 

Similarly, some implicit knowledge questions are probably not worth
optimizing for: 
\begin{itemize}
\item Asking a model $\mathcal{M}$ to perform cryptographically hard tasks,
like inverting a hash function. Although this ability may have practically
impactful applications, it is just impossible to do practically, and
there is no `partial credit': unless we find a `correct' solution,
we find nothing. Also, any solution will not generalize to anything
else. 
\item Asking a model $\mathcal{M}$ to output (e.g. in decimal notation)
its action $\mathcal{S}_{\mathcal{M}}$ on any fixed sentence with
six digits of precision is not a great implicit knowledge question.
In addition to being hard, it is not very clear what we would learn
from that, and how this would generalize to other tasks. 
\item Putting an exaggerated focus on specialized tasks that LLMs are not
optimal for (like arithmetics of very large numbers or letter counting):
while these are definitely useful capabilities, all else equal, these
seem a little too narrow to generalize arbitrarily well to other tasks. 
\end{itemize}
In order to avoid the above pitfalls, important questions about the
implicit abilities of LLMs should involve families of tasks such that
(informally) we have the following:
\begin{itemize}
\item There is a clear connection with the task examples outlined in Section
\ref{subsec:practical-importance-of-the-implicit-knowledge}.
\item The optimization space is naturally suited for LLMs, i.e. the task
results should be strings of tokens, not (particularly) numbers or
exotic data. 
\item The tasks admit a least some easy approximations (i.e. we can find
admissible solutions before looking for optimal ones), with reasonable
partial credit allowed.
\item Complex tasks can be decomposed into relevant subtasks of the same
family, allowing LLMs to (at least partially) leverage skills learned
on the subtasks.
\item The family is rich enough so that tasks are not isolated: for each
task, there is a number of different, but related ones. 
\end{itemize}
The key contribution of this paper is to propose an approach to fulfill
the above, via the introduction of a certain class of games. 

\subsubsection{\label{subsec:llm-games}LLM Games}

In Section \ref{subsec:pitfalls-and-desiderata}, we emphasized a
number of desiderata for implicit measure tasks that we would like
LLMs to be (or to become) explicitly competent at (in the sense of
explicit abilities). This may seem like a substantially under-determined
problem: what makes a task \emph{interesting} is intrinsically subjective,
depending on one's objective. 

Our approach is to embrace the intrinsic subjectivity of this problem
by using the same philosophy which lead us to consider implicit knowledge.
This leads us to LLM-based \emph{games}: situations with LLMs facing
LLM-generated contexts and pursuing ascribed objectives, competing
or cooperating with one another, with rules enforced by LLM-based
arbitration and scores given by LLM measures. As a result of this,
optimal play is naturally in the implicit knowledge of the LLM involved
(or of the combination of the LLMs involved). The general relevance
of LLM-based games to study the implicit knowledge of LLMs becomes
rather natural: such games are about following objectives that result
from simple LLM-measurable scoring in an environment created by LLMs,
and driven by cooperation and competition with other LLMs. 

Games have been used since the inception of AI, lying at the foundations
of the field, with the Turing test (viewed for a long time as a hallmark
of AI, as well as a definition) being introduced as an \emph{imitation
game} \cite{turing}. In the recent years, games have driven many
of the exciting results in the field (see e.g. \cite{atari,alpha-zero,deepstack}).
However, besides Turing's foundational works, the \emph{choice of
the games for AI }has traditionally been motivated by socio-historical
context (e.g. having been widely played by humans for a long time)
rather than \emph{intrinsic relevance} (unique qualities that make
the game worth playing); it is hard to argue e.g. that the game of
chess is \emph{uniquely relevant} as a way to achieve general intelligence,
or even at achieving any AI objective that does not specifically mention
chess. For instance, we are not aware of any perfect-information two-player
game (like Go or Chess) designed with general intelligence in mind. 

In this context, the problem that our work attempts to propose a reasonable
solution to is the following:
\begin{problem}
\label{prob:find-a-class-of-games}Find a class of games that is large
enough to contain interesting examples, while being minimal under
reasonable constraints of consistency.
\end{problem}

In Section \ref{sec:xent-games}, we propose the space of so-called
Cross-Entropy Games or Xent Games as an answer to this problem: this
space forms a consistent family of games which can be concisely expressed.
Informally, Xent Games are about evaluating scenarii that come with
`scores' determined by signed cross-entropies evaluations with signed
cross-entropy constraints. 

We show that the Xent Game Space can be derived from a small number
of axioms. Furthermore, it covers all the examples of implicit tasks
of Section \ref{subsec:implicit-knowledge-vs-explicit-knowledge}.
Extending Xent Games to allow for incomplete information settings
leads to a family of games which includes versions of the above tasks
with strategic behavior (e.g. these could include bluff, coordination,
etc.). We postulate the Xent Games can suggest many further interesting
implicit knowledge tasks.

As will be discussed below, the Xent Games can be leveraged to probe
the abilities of LLMs by exploiting the gap between the explicit and
implicit capabilities. We rely on the notion of\emph{ transfer value
of a game}, to derive a dynamics inspired by evolutionary ideas on
the game space, that can be used to probe the general capabilities
of LLMs.

\subsection{\label{subsec:general-vision-and-outline}General Vision and Outline}

In the previous subsections, we have introduced, for an LLM, the notions
of explicit knowledge (Section \ref{subsec:implicit-capabilities})
and implicit knowledge (Section \ref{subsec:implicit-knowledge-and-games}),
and suggested that the gap between the two notions can be leveraged
to evaluate the capabilities of LLMs, in particular towards providing
a theoretically-grounded measure of their general capabilities. We
have argued that the implicit knowledge of LLMs should be approached
via the play of so-called LLM games (Section \ref{subsec:llm-games}).
In the next sections, we present how this idea can be realized using
certain class of LLM games, called \emph{Cross-Entropy Games} or \emph{Xent
Games}.
\begin{itemize}
\item In Section \ref{sec:xent-games}, we introduce Xent Games. These are
games about strings of tokens evaluated by LLMs, which can be expressed
in terms of diagrams, and written down using a simple domain-specific
language. We then show that the Xent Game family is the smallest family
of games that is stable under a few game-theoretic consistency axioms:
in other words, the Xent Game family can be constructed from a single
game by iterating a small set of moves. 
\item In Section \ref{sec:xent-game-examples}, we show that despite being
a relatively small family of games, the Xent Game family contains
a wealth of examples, in particular all the examples of tasks described
in Section \ref{subsec:practical-importance-of-the-implicit-knowledge}.
We show that the family contains a number of interesting examples
related to those outlined in Section \ref{subsec:implicit-capabilities},
and suggest further useful implicit capabilities of LLMs. 
\item In Section \ref{sec:xent-game-space-model-play-properties}, we introduce
the key concepts relevant to working with Xent Games as a means to
evaluate the capabilities of LLMs. We introduce a number of notions
for Xent Games: well-posedness, playability, and transfer value. 
\item In Section \ref{sec:xent-game-measures}, we discuss the use of the
Xent Games as a means to evaluate the abilities of LLMs. The idea
is to normalize scores using a base model, and then from a given family
of games, build \emph{Xent Game Measures}: the key idea is that once
a \emph{scope} (a family of Xent Games representing some abilities)
has been defined, a \emph{minimal covering }subfamily can be extracted,
leading to the creation of a measure.
\item In Section \ref{sec:evolution-in-game-space}, we examine the key
challenge associated with \emph{general capabilities}: unbounded scope.
Based on game-theoretic considerations and evolutionary ideas, we
propose an algorithm to grow a scope in a systematic and coherent
fashion. This leads, in particular, to a theoretically-motivated path
to measure the general capabilities of LLMs. 
\item In Section \ref{sec:summary-discussion-and-perspectives}, we summarize
our ideas, and outline perspectives for future exploration and research. 
\end{itemize}

\subsection*{Acknowledgements}

Many insights presented in this paper have emerged from work done
by the first author in (past and ongoing) collaborations with Diego
Dorn, Franck Gabriel, Vassilis Papadopoulos, Arthur Renard, Marco
Tuccio, and Jérémie Wenger on Large Language Models, with whom key
ideas were discussed and investigated. 

In addition, the authors would like to thank Emmanuel Abbé, Apoorv
Agarwal, Alberto Bietti, Gloria Capano, Tarun Chitra, Jordan Cotler,
Mario Geiger, Nicola Greco, Leonard Hardiman, Bara Hudcová, Arthur
Jacot, Niels Linnemann, João Penedones, and Stanislav Smirnov for
interesting discussions and comments on earlier version of the manuscript,
as well as the participants to the Quine seminar and Demeco workshop
for insightful questions and remarks (in particular Clément Moulin-Frier). 

\section{\label{sec:xent-games}Xent Games}

In Section \ref{subsec:llm-games}, the idea to use games to elicit
the implicit measure of LLMs was introduced, leading to the problem
of finding a suitable space of games (Problem \ref{prob:find-a-class-of-games}).
In this section, we introduce Cross-Entropy Games, or \emph{Xent Games}
as an answer to this question. 

\subsection{\label{subsec:informal-description-and-goals}Informal Description
and Goals}

Xent Games are single- or multi-player turn-by-turn (with a finite
number of turns) text-based games involving a reference LLM measure
$\mathcal{M}$ to assign, from the cross-entropy action $\mathcal{S}_{\mathcal{M}}$:
\begin{enumerate}
\item The rewards assigned to players for their moves (see Section \ref{subsec:xents-axents-and-signed-xent-sums}).
\item The restrictions on the set of allowed moves (see Section \ref{subsec:moves-and-constraints}).
\end{enumerate}
Xent Games are general-sum imperfect information games with complete
information (the rules are known to all players), though many interesting
examples can already be found among perfect information games, in
particular single-player and two-player zero-sum games (see Section
\ref{sec:xent-game-examples}). 

\subsubsection{\label{subsec:nature-of-xent-games}Nature of Xent Games}

Informally, one could say that Xent Games are about \emph{studying
scenarii} (in the form of combinations of strings produced by the
players), and weighing their plausibility in terms of cross-entropy
measures; in other words, playing Xent Games is tantamount to identifying
plausible paths in spaces of scenarii that fulfill constraints that
are either explicit a priori, or that come from other players' moves
(which themselves can be produced by a perfect information or an imperfect
information setup). Roughly speaking, these can be described as\emph{
`games that live in the minds of LLMs}'.

As a result, Xent Games are really about \emph{path-finding} in a
complex string environment dictated by cross-entropy measures (via
the `judge model', see \ref{subsec:judge-model} below), with the
idea that such environments are partially static and partially dynamic
due to the influence of other agents. The scenarii are formed by concatenation
and cuts of strings, which are either given a priori, generated by
a `map-generator' (which we sometimes refer to as `the stories'),
or elicited from the various players. This informal description should
make it unsurprising that the class of Xent Games is in fact very
rich (see e.g. Section \ref{sec:xent-game-examples} for a few examples). 
\begin{rem}
\label{rem:map-degeneracy}It should be noted that for long-range
natural language cases, some care must be taken to ensure games are
nontrivial: minimizing the cross-entropy of a continuation (the so-called
`MAP-continuation') of a given text with no constraints can lead to
trivial repetitive outputs \cite{stahlberg-byrne,meister-cotterell-vieira}.
While it is important to have these considerations in mind, it should
be noted that in many cases (particularly reasoning), minimal cross-entropy
solutions are in fact useful and non-degenerate \cite{song-wang-li-lin,shi-yang-cai-zhang-wang-yang-lam}.
Finally, and most importantly, these issues are not critical here:
what matters for the purpose of answering Problem \ref{prob:find-a-class-of-games}
is just to identify `enough' games. 
\end{rem}

\subsubsection{\label{subsec:xent-game-space} Xent Game Space}

Because of their common structure, Xent Games are tightly related
to one another: via simple operations on a Xent Game, one can end
up with a different one; in fact, the minimality of the Xent Game
Space under a few basic game-theoretic operations essentially means
one can go from any Xent Game to any other one by a sequence of natural
`moves' in the game space (see Section \ref{subsec:xega-space-characterization}). 

The Xent Games are designed to be played by LLMs; typically when using
a Xent Game to evaluate or train a model, a `main character player'
will be picked from among the other \emph{NPC }players (following
the videogame terminology for non-playable characters). Each Xent
Game instance (i.e. `New Game', in videogame terminology) is typically
built upon a sampled string `context' (i.e. the `Game Map'), which
can be generated by an LLM with a prompt. 

In the subsequent Sections, Xent Games will be used as a means to
assess and develop the abilities of an LLM playing them. In particular,
measures of the \emph{playability value} (see Section \ref{subsec:playability-few-shot-and-fine-tuning-definitions})
and of the \emph{transfer value }(see Section \ref{subsec:transfer-value})
will be introduced, which will allow us to explore the space of Xent
Games in a principled way. 

\subsection{\label{subsec:xent-games-context-moves-constraints-rewards}Xent
Games: Context, Moves, Constraints, and Rewards}

As discussed above, Xent Games are turn-by-turn $n$-player (with
$n\geq1$) complete information games (the rules are known to everyone)
played in the space of strings (i.e. they are text-based), running
for a fixed finite number of steps. In this subsection, we define
Xent Games mathematically. The programming language and the graphical
representation used to build Xent Games are presented in Sections
\ref{subsec:xgl-xent-game-language} and \ref{subsec:xent-graphical-language}
respectively; note that for practical reasons, the language and representation
only implement a subset of the Xent Games (with a limit on the number
of players, of turns, etc.) and that, to make the writing of relevant
games easier, they include some syntactic sugar and extra functions. 

\subsubsection{\label{subsec:game-metadata}Game Metadata}

Each game specification involves some metadata: 
\begin{itemize}
\item The specification of the `judge' model $\mathcal{J}$ (see \ref{subsec:judge-model})
used for rewards and constraint enforcement.
\item The specification of the models used to play the NPC players, if applicable
(typically, all the NPCs are played by the same model; this model
can even be $\mathcal{J}$). 
\item The number of variables in the string space (see below). 
\item The specification of string constants used as part of the context
and as part of the prompts to generate the context. 
\item The specification of size constants used as constraints on the sizes
of the players' moves. 
\item The specification of the maximum number of steps allowed in the game.
\end{itemize}
All the metadata is shared with all the players before the game starts. 

In addition to the above, each game comes with a special piece of
metadata, which is the session seed, used for the model generation
of the context. This is typically set externally from the rest of
the metadata.

\subsubsection{\label{subsec:judge-model}Judge Model}

The heart of a Xent Game, upon which the gameplay dynamics relies,
is the judge model $\mathcal{J}$: this is the model at the heart
of the implicit measure questions. The idea is that $\mathcal{J}$
should represent as much as possible the `dynamics' of the world,
as it has been learned via vast amounts of data by an LLM. As such,
the natural choice of $\mathcal{J}$ would typically be a pre-trained
LLM; the quality of $\mathcal{J}$ naturally influences the direct
relevance of the skills measured and learned via playing Xent Games.
In particular, for certain Xent Games to match their expected counterparts
(e.g. Chess or Math proofs, see Section \ref{subsec:two-player-zero-sum-combinatorial-games}),
judge models need to be strong enough to be able to accurately recognize
correct moves or arguments.

This doesn't mean that playing with relatively weak judge models is
absolutely uninteresting: finding a way around naive enforcement of
rules can actually be an interesting task. In fact, an interesting
choice for a judge model can be an `umbrella model' which hosts several
models of various capabilities under the hood, with possibly an activation
prefix token to specify if a particular model is to be used. These
kinds of constructions can be useful for e.g. contrastive problems
(see Section \ref{subsec:practical-importance-of-the-implicit-knowledge}
above), where one may try to maximize plausibility for a strong model
while reducing plausibility for a weak model. 

For the sake of concreteness and simplicity, the model $\mathcal{J}$
can be assumed to be a strong, pre-trained LLM throughout this paper.
Still, as discussed in Section \ref{subsec:role-of-judge}, the role
of $\mathcal{J}$ should not be under-estimated. 

\subsubsection{\label{subsec:string-space-and-operations}String Space and Operations}

The Xent Games are inherently text based: the players process the
rules (which are written in code), receive text information, and play
text moves. The \emph{string space} at any time of the game is a fixed
set of \emph{variables}, which are all strings, initialized by default
to be the empty string. The set of \emph{variables} includes the context,
the moves of the players and all relevant intermediate variables.
Outside of initialization and player moves, the string space can evolve
(due to assignment to existing variables from existing variable),
via the following simple `cat' and `cut' operations:
\begin{itemize}
\item Cat: this is simply the concatenation of two strings (at the token
level), denoted by $s1+s2$. Note that, in practice, this means that
a space character is sandwiched between $s1$ and $s2$.
\item Cut: this corresponds to the `head' and `tail' of a `split' operation.
$s//t$ denotes the substring of $s$ coming \emph{before} the first
occurrence of $t$ (not including $t$; corresponding to $s$ if $t$
does not appear in $s$) and $s\%t$ denotes the substring of $s$
coming \emph{after} the first occurrence of $t$ (not including $t$;
corresponding to the empty string $t$ does not appear in $s$). By
convention, we say that $t$ does not appear in $s$ if it is empty. 
\end{itemize}
It is easy to see that this space of allowed operations can be used
to do matching and replacement for basic patterns (for a fixed maximal
number of appearances of the pattern). 

\subsubsection{\label{subsec:xents-axents-and-signed-xent-sums}Xents and Signed
Xent Sums}

The building blocks of the rewards and constraints in Xent Games are
cross-entropy losses on predictions of the judge model $\mathcal{J}$. 
\begin{itemize}
\item We denote by $xent(s|t)$ the cross-entropy loss of $s$ conditional
to $t$, i.e. if $t$ is $n$-token long and $s$ is $m$-token long,
we have 
\[
xent\left(s|t\right)=-\log\mathbb{P}_{\mathcal{J}}\left\{ \left(X_{n+1},\ldots,X_{n+m}\right)=s|\left(X_{1},\ldots,X_{n}\right)=t\right\} .
\]
By convention, if $s$ is empty, we define $xent\left(s|t\right)=0$. 
\item We denote a $\pm1$-weighted sum of xents of strings in string space
a \emph{signed xent sum}, i.e. any expression of the form $\sum_{i=1}^{n}\sigma_{i}xent\left(s_{i},t_{i}\right)$
where $s_{1},\ldots,s_{n}$ and $t_{1},\ldots,t_{n}$ are in the string
space and $\sigma_{1},\ldots,\sigma_{n}\in\left\{ \pm1\right\} $.
\end{itemize}
As detailed below, signed xent sums form the basis of the rewards
given to the players and of their move constraints. 

\subsubsection{\label{subsec:moves-and-constraints}Moves and Constraints}

The key steps of Xent Games are the \emph{move turns} performed by
players, which we call \emph{elicits}. At each elicit, a player is
asked to produce:
\begin{itemize}
\item One or several (a fixed number, specified by the rules) token strings
of (token-measured) lengths belonging to a specified interval (specified
by the game metadata). 
\item The moves must be \emph{feasible} (i.e. acceptable). To be feasible,
a move must satisfy a number $k\geq0$ of signed xent sum constraints,
called \emph{ensures}: each such constraint is of $\sum_{i=1}^{n}\sigma_{i}xent\left(s_{i}|t_{i}\right)\geq0$
for some strings $s_{1},\ldots,s_{n}$ and $t_{1},\ldots,t_{n}$ in
string space and some signs $\sigma_{1},\ldots,\sigma_{n}\in\left\{ \pm1\right\} $. 
\item The moves are performed on the basis of available information provided
to the player at the time of the elicit: 
\begin{itemize}
\item A set of variables in the string space. 
\item The rewards awarded to the player so far.
\end{itemize}
\end{itemize}

\subsubsection{\label{subsec:game-operations}Game Operations}

Once the metadata is specified, the game operations go as follows: 
\begin{itemize}
\item At the beginning of a game, the context strings (if applicable) are
built from LLM calls.
\item The game loop runs a fixed sequence of basic steps (determined in
advance, and of length not exceeding the maximal number specified
in the metadata). Each basic step consists of one of the following:
\begin{itemize}
\item Elicit a move from a player, with \emph{ensure} constraints and certain
information \emph{reveal} calls, as explained in Section \ref{subsec:moves-and-constraints}
above. If a player produces a move that is not feasible, they get
to update their moves. If no feasible move can be found at the $\ell$-th
attempt (where $\ell\geq1$ is specified in the metadata), the player
receives a $-\infty$ reward and the game stops. 
\item Reward a player a certain signed xent sum based on some strings in
string space. 
\item Evolve the string space via a string operation. 
\end{itemize}
\item After the steps are run, the game instance terminates. 
\end{itemize}
\begin{rem}
For readability and simplicity, the XGL specification formulates the
operations and the constraints in a slightly different way, though
the game logic can easily be seen to be the same (Section \ref{subsec:xgl-xent-game-language}
below). 
\end{rem}

\begin{rem}
In some cases, the game loop will literally be a loop running for
a pre-determined number of steps or a family of such nested loops,
allowing for a shorter description of the rules, and allowing players
to improve their moves and strategies throughout a single game.

Informally speaking, each game instance corresponds to a sequence
of competitive questions around the implicit measure of $\mathcal{S}_{\mathcal{J}}$.
As we will see in Section \ref{sec:xent-game-examples}, the class
of Xent Games is in fact very wide. 
\end{rem}

\subsubsection{\label{subsec:game-rules-and-code}Game Rules and Code}

Based on the description of the game operations above, the \emph{rules}
of a Xent Game consist of the following:
\begin{itemize}
\item The metadata (see Section \ref{subsec:game-metadata}).
\item The fixed sequence of steps and their description. 
\end{itemize}
As will be explained in Section \ref{subsec:xgl-xent-game-language}
below, Xent Games can be written using a domain-specific programming
language and a graphical language. 

\subsubsection{\label{subsec:xent-game-space-design-philosophy}Xent Game Space:
Key Design Philosophy}

Beyond their theoretical characterization (see Section \ref{subsec:xega-space-characterization}
below), a number of design elements are a good way to summarize what
makes the space of Xent Games useful:
\begin{itemize}
\item These are games where agents play with the implicit measure $\mathcal{S}_{\mathcal{J}}$
of a given model, trying to accommodate constraints dictated by it
and to optimize information-theoretic scores related to it, while
thinking strategically about the other players (for multi-player games).
In a sense, they are games of \emph{string composition} with fixed,
cooperative, and competitive constraints, where the basic tools for
composing strings are cat/cut operations. 
\item While the games revolve around optimizing cross-entropy based quantities,
playing them does not need to explicitly output cross-entropy or probability
numbers in text, which would not be very natural for text-based outputs. 
\item Games are designed such that they run for a fixed amount of time (provided
that feasible moves can be found), and their syntactic validity simply
corresponds to the individual independent validity of each step, making
it easy to `mix' games by step combination.
\item As will be discussed below (Section \ref{subsec:characterization-result}),
the natural structure on the space of Xent Games makes them tightly
related (in structure) to one another, making it (at some level, at
least) easier to transfer knowledge about how to play one game to
another. 
\item Though clearly limiting, the use of signed xent sums rather than more
general functions of xents (or even of arbitrary linear combinations)
makes normalization questions easier (see \ref{subsec:score-normalization}
below), and it alleviates an over-reliance on subtle arithmetic operations
which do not correspond to conceptually interesting implicit measure
questions. Furthermore, signed xent sums are definitely close in spirit
to quantities that would naturally lend themselves to information-theoretic
interpretations (see e.g. the `bits back' coding introduced in \cite{hinton-van-camp}).
\item It should not be expected that all Xent Games bring transfer value,
or even that they are playable. A key (heuristic) thesis of this paper
is that the space of Xent Games is so rich that it is in fact relatively
easy to find `good' games in various senses that we discuss in Sections
\ref{subsec:playability-few-shot-and-fine-tuning-definitions}, \ref{sec:evolution-in-game-space},
and \ref{sec:summary-discussion-and-perspectives} below. 
\end{itemize}

\subsubsection{\label{subsec:two-simple-examples}Two Simple Examples: Reverse Prompt
and Paradox Game}

Having introduced Xent Games in Section \ref{subsec:game-operations},
we now provide two particularly simple example Xent Games that play
a special role towards building Xent Games and understanding the relation
of our approach with Problem \ref{prob:find-a-class-of-games}: the
first is in some sense an example of a `good game' (though in practice
it suffers from some weaknesses, see Remark \ref{rem:xent-game-weakness}
below), while the second one is an example of a `bad game' (see Remark
\ref{rem:paradoxical-game-is-bad} below). We write these games in
natural language; the many further examples of Section \ref{sec:xent-games}
will be written in XGL. 

For both games, a fixed judge model $\mathcal{J}$ is picked. Recall
that for an $m$-token string $s$ and an $n$-token string $t$,
we denote by $xent\left(s|t\right)$ the quantity defined as $-\log\mathbb{P}_{\mathcal{J}}\left\{ \left(X_{n+1},\ldots,X_{n+m}\right)=s|\left(X_{1},\ldots,X_{n}\right)=t\right\} $.

The first is a single-player game:
\begin{example}[Reverse-Prompt Game]
\label{exa:reverse-prompt}Initialization: load a random (e.g. generated
by an LLM) $p$-token text into $s$. Elicit $t$, a $q$-token move
from the (unique) player. Reward $-xent\left(s|t\right)$ to the player.
\end{example}

This first game is a simple (though typically hard) combinatorial
optimization game about the implicit knowledge of $\mathcal{J}$:
it essentially asks the player to find the best `summary' of $s$
from the point of view of the judge measure $\mathcal{S}_{\mathcal{J}}$.
The reverse prompt game appears to be closely related to a number
of interesting problems:
\begin{itemize}
\item Compression problem: the question can be phrased as asking how to
`pack' as much information as possible about $s$. An interesting
variant is to take several texts $c_{1},\ldots,c_{n}$ and give $-\sum_{j=1}^{n}xent\left(c_{j}|t\right)$
to the player.
\item Adversarial reverse prompting tasks: what would be a prompt that would
induce $\mathcal{J}$ to produce $c$? 
\item Problem solving: given a problem with an easy-to-check solution, can
one find a prompt (the solution) which leads $\mathcal{J}$ to acknowledge
that the solution is correct?
\end{itemize}
Generally speaking, the reverse-prompt game (as well as some variants
of it) captures many features that we would like to access from an
implicit measure point of view: surely a model can generate, given
a prompt, but can it figure out what prompt could have made a certain
output likely? Does it know what can cause it it to say something?
As will be seen in Section \ref{subsec:xega-space-characterization},
from this single game and two game axioms, one gets the entire space
of Xent Games.
\begin{rem}
\label{rem:xent-game-weakness}In practice, the game of Example \ref{exa:reverse-prompt}
can admit, in its raw form, some unfortunately uninteresting solutions.
For instance, repeating some low-probability tokens of the text $s$
can be a winning strategy, if $s$ contains rare enough word combinations.
In Section \ref{subsec:one-player-combinatorial-games}, we discuss
some less trivial variants that are more interesting to play (both
for humans and models).
\end{rem}

The second example is somehow simpler, though definitely less natural:
it is a paradoxical example (and probably the simplest one) that shows
that shows the (theoretical) impossibility of models having full access
to the implicit measure. It is a zero-sum perfect information two-player
game:
\begin{example}[Paradoxical Game]
\label{exa:paradox}Ask player $white$ to provide a $p$-token string
$s$ for $p=1000$. Then ask player $black$ to provide a $q$-token
string $t$, for $q=50$. Reward $xent\left(s|t\right)$ to $white$
and $-xent\left(s|t\right)$ to $black$.
\end{example}

While this game has an optimal deterministic solution (being a perfect
information game), $white$ cannot play optimally. While there definitely
exists an (extremely hard to compute) string $s$ that `is worst summarized
by any $50$-token string $t$' (assuming $black$ plays optimally),
if $white$ could deterministically (or with a reasonably high probability)
output that $s$ from the description of the game (which is less than
50 tokens in length), then $black$ could pick the description of
the game as $t$, and this would give a very low $xent\left(s|t\right)$
score to $white$, much lower than the optimal score should warrant.
\begin{rem}
\label{rem:paradox-lemma}This is a small twist on the paradox `the
shortest sentence that cannot be described in ten words' (that sentence
cannot exist as such because the quotemarked description would be
itself a ten-word description of it). 
\end{rem}

\begin{rem}
\label{rem:paradoxical-game-is-bad}While interesting as a means to
illustrate a point, the paradoxical game is an example of an ill-posed
game (see \ref{subsec:game-well-posedness} below), and as such it
is not particularly interesting. However, coupling it with some more
constraints or rewards can make it well-posed and interesting (see
Section \ref{subsec:two-player-zero-sum-combinatorial-games} for
instance). 
\end{rem}

\subsection{\label{subsec:xega-space-characterization}Xent Space Characterization}

The space of Xent Games, defined in Section \ref{subsec:xent-games-context-moves-constraints-rewards},
satisfies a number of desirable properties. Beyond the design elements
outlined in Section \ref{subsec:xent-game-space-design-philosophy},
one can show that this space naturally follows from a number of simple
game-theoretic properties. 
\begin{itemize}
\item We first argue that the subspace of perfect information Xent Games
follows naturally from three basic axioms: it is the smallest space
stable under three basic axioms. 
\item The space of general Xent Games is then the smallest space containing
the Xent Games that allows for imperfect information, i.e. for some
information not to be revealed to some players. 
\end{itemize}
Informally, the characterization result shows that the space of Xent
Games is connected: by simple `moves' in the game space, one can go
from any Xent Game to any other Xent Game. 

\subsubsection{\label{subsec:turn-by-turn-string-games}Turn-By-Turn String Games:
Definition and Composition}

If one asks what is the natural space of games to be played for a
fixed number of steps by a fixed number $n\geq1$ of LLMs, given that
all they do is to process text, one is naturally led to a space of
games with the same structure as in Section \ref{subsec:game-operations},
except that the rewards are general and that the string space operations
are general.
\begin{defn}
\label{def:space-general-turn-by-turn-string-games}The space of general
finite turn-by-turn $n$-player string games\emph{ (general string
games }for short\emph{)} is that of text games made of a fixed finite
number of steps, where one first pre-loads a (possibly random) state
in string space, based on metadata, and at each step (possibly decided
as a randomized function of the string space):
\begin{itemize}
\item Elicits a move from a player, based on information made up of a subset
of the string space, the player's rewards so far, and a subset of
rewards of the other players. 
\item Rewards a player based on the state of the string space (with the
player being informed of its own rewards). 
\item Updates the string space, using (possibly randomized) total functions
(i.e. functions that `always return a string'). 
\end{itemize}
\end{defn}

An important point about this (very large) space of games is that
we can assume, without loss of generality that the set of players
(i.e. their names) is the same for all games. Similarly, we can assume
that the set of variables in the string space is always the same (the
variables are always initialized to the empty string by default).
This leads to the fundamental property of \emph{composition} of operations
(in a similar spirit as the so-called Open Games framework, \cite{ghani-hedges-winschel-zahn}):
operations from two games can be composed sequentially. 
\begin{defn}[Composition]
\label{def:composition}A family of string games $\mathcal{G}$ satisfies
the \emph{composition property} if operation steps of one game $G\in\mathcal{G}$
can be injected (possibly with player-swapping) into another game
$G'\in\mathcal{G}$ leading to a new game $G''$ also in $\mathcal{G}$,
in particular rewards (since they are defined from the same string
space and targeted at the same set of players) can be added (i.e.
done one after the other).
\end{defn}

The question addressed in Section \ref{subsec:characterization-result}
looks to characterize Xent Games as being a relevant subspace of the
general string games of Definition \ref{def:space-general-turn-by-turn-string-games}.
This naturally follows from two stability properties:

\subsubsection{\label{subsec:adversarial-rescaling-and-zero-summing}Adversarial
Rescaling and Zero-Summing Stability}

Building upon the composition property, and motivated by competitive
constraints for games, we propose two stability conditions for a game
space: adversarial rescaling and zero-summing stability. 

From the composition property (Definition \ref{def:composition})
follows the fact that rewards can be rescaled by an integer factor
$\lambda\geq0$: for $\lambda=0$, the reward step can be skipped,
for $\lambda\geq2$, one can put multiple copies of the reward step.
Inspired by the theory of Lagrange multipliers, we allow the reward
rescaling factor to be chosen by an external adversarial agent conspiring
against the player $P$ receiving the reward, which we call\emph{
adversarial reward rescaling}.

Like in the theory of Lagrange multiplier, an adversarial reward rescaling
corresponds to transforming a reward step into a hard constraint that
this reward must be nonnegative: if the reward is nonnegative, the
adversary will simply multiply it by $0$ (thus it becomes void),
if the reward is negative, the adversary will multiply by $\lambda\to+\infty$,
causing the player to lose. Again, like in the theory of Lagrange
multipliers, the adversarial reward rescaling stability allows one
to go from rewards to constraints, putting priorities on certain `rewards'
(corresponding to `regulations'): first a mini-game must be won (against
the `adversary'), and then, if that mini-game is won, the string space
is used to play another game. This leads us to the following formulation:
\begin{defn}[Adversarial Rescaling Stability]
\emph{\label{def:adversarial-rescaling-stability}} A family $\mathcal{G}$
of string games is stable under\emph{ (dynamic) adversarial (reward)
rescaling} if for each game $G\in\mathcal{G}$ and each reward $R$
to a player $P$, the game $G'$ where that reward $R$ is replaced
by the constraint that the amount of $R$ must be nonnegative, is
still in $\mathcal{G}$. 
\end{defn}

\begin{rem}
\label{rem:no-ensure-fails}This formulation corresponds to not allowing
the players to fail at any \emph{ensure} operation (number of total
attempt must be $1$) 
\end{rem}

The second simple competitive stability property making the games
economically more sensible is zero-summing: asking that one player's
gain corresponds to another player's loss, i.e. that rewards are just
transferred from one player to another (which is for instance a feature
of the paradoxical game in Section \ref{subsec:two-simple-examples}
above). 
\begin{defn}[Zero-Summing Stability]
\label{def:zero-summing}A family $\mathcal{G}$ of string games
is \emph{stable under zero-summing} if, for each game $G\in\mathcal{G}$,
the game $G_{0}$, where each player's reward is followed by a reward
of negative that amount to another player, is still in $\mathcal{G}$. 
\end{defn}

\subsubsection{\label{subsec:characterization-result}Characterization Result}

We now turn to the characterization of the Xent Game space from key
properties extracted in Section \ref{subsec:xega-space-characterization}
about general string games (see Definition \ref{def:space-general-turn-by-turn-string-games}).
We first extract a few specific features on the space of general string
games, and then reconstructs the Xent Game space from these. 
\begin{defn}
From the above three conditions (Definitions \ref{def:composition},
\ref{def:adversarial-rescaling-stability}, and \ref{def:zero-summing})
and the basic reverse prompt game (Example \ref{exa:reverse-prompt}),
we can identify the Xent Game space (with no ensure fail allowed,
see Remark \ref{rem:no-ensure-fails}), defined in Section \ref{subsec:xent-games-context-moves-constraints-rewards}:
\end{defn}

\begin{thm}
\label{thm:characterization-perfect-information-xent-games}The perfect
information Xent Games form the smallest family of string-space games
$\Sigma$ that contains the reverse prompt game, supports cat/cut
string updates, and is stable under sequential composition, adversarial
rescaling, and zero-summing.
\end{thm}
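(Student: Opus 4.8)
The plan is to prove the two inclusions $\Sigma\subseteq\mathcal{X}$ and $\mathcal{X}\subseteq\Sigma$, where $\mathcal{X}$ denotes the family of perfect information Xent Games of Section \ref{subsec:xent-games-context-moves-constraints-rewards} and $\Sigma$ is, as in the statement, the smallest family of string-space games containing the reverse prompt game (Example \ref{exa:reverse-prompt}), supporting cat/cut updates, and closed under composition (Definition \ref{def:composition}), adversarial rescaling (Definition \ref{def:adversarial-rescaling-stability}), and zero-summing (Definition \ref{def:zero-summing}); equivalently, $\Sigma$ is the closure of the reverse prompt game under these moves. The inclusion $\Sigma\subseteq\mathcal{X}$ is the routine direction: one checks that $\mathcal{X}$ is itself a family enjoying all the required properties, so that, being the smallest such family, $\Sigma$ must lie inside it. The reverse inclusion $\mathcal{X}\subseteq\Sigma$ — that every perfect information Xent Game is reachable from the reverse prompt game by the three moves — is the substance of the theorem.

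For the easy inclusion I would verify the properties one by one against the definition of $\mathcal{X}$ in Section \ref{subsec:xent-games-context-moves-constraints-rewards}. The reverse prompt game is a single-player Xent Game, and cat/cut are precisely the string-space operations allowed in $\mathcal{X}$. Composition preserves membership because a Xent Game is a fixed sequence of elicit/reward/evolve steps over a shared variable and player set, so interleaving the steps of two such games (with player-swapping) yields another such sequence, and in particular successive rewards to one player add up to a signed xent sum reward. Adversarial rescaling turns a reward, which is a signed xent sum, into the hard constraint that this sum be nonnegative — exactly an \emph{ensure} constraint of $\mathcal{X}$, with the single-attempt reading of Remark \ref{rem:no-ensure-fails}. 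Zero-summing appends to another player the negated reward, which is again a signed xent sum; hence $\mathcal{X}$ is stable in all three senses.

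For the hard inclusion I would show each primitive appearing in an arbitrary game of $\mathcal{X}$ is synthesizable, then assemble them by composition in the prescribed order. The reverse prompt game supplies an elicit step and a reward step of the form $-xent(\cdot\,|\,\cdot)$; since all games share one variable set, cat/cut can route the contents of any two variables into the argument slots of this reward, realizing $-xent(a|b)$ for arbitrary variables $a,b$. Zero-summing converts such a reward to another player into a $+xent(a|b)$ reward, so every single term $\sigma_i\,xent(s_i|t_i)$ with $\sigma_i\in\{\pm1\}$ is available, and composition adds these into an arbitrary signed xent sum reward to any chosen player. Multiplayer structure comes for free from the shared player set together with player-swapping in composition and the transfers produced by zero-summing; in the perfect information regime every \emph{reveal} is total, so no information-hiding primitive is needed (this is exactly what the separate imperfect-information axiom later supplies). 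Finally, an \emph{ensure} constraint is obtained by first assembling the desired signed xent sum as a single reward immediately after the relevant elicit, then applying adversarial rescaling once to that combined reward; with cat/cut evolves available as primitives, this covers all three step types of $\mathcal{X}$.

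The main obstacle is the bookkeeping that makes composition strong enough to justify these reductions: one must fix once and for all the conventions that the variable set and player set are common to all games, and verify that cat/cut genuinely suffices to copy and re-slot variable contents (handling the trailing-space subtlety of $+$ and the empty-separator convention of $//$ and $\%$) so that the single hardwired reward template of the reverse prompt game can target arbitrary arguments and players. The most delicate point is the order of operations in the \emph{ensure} construction: adversarial rescaling acts on \emph{one} reward, so the signed xent sum must be consolidated into a single reward \emph{before} rescaling — rescaling the individual terms separately would instead produce a conjunction of per-term constraints, which is strictly weaker than the single constraint $\sum_i\sigma_i\,xent(s_i|t_i)\geq0$. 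Checking that compose-then-rescale reproduces exactly the feasibility semantics of Section \ref{subsec:moves-and-constraints} (a single constraint, single attempt, $-\infty$ on failure matching the $\lambda\to+\infty$ multiplier) is where the argument must be made with care.
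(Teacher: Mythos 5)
Your proposal is correct and follows essentially the same route as the paper's own proof: the easy inclusion by checking that the Xent Games are stable under the three operations and contain the reverse prompt game, and the hard inclusion by synthesizing arbitrary signed xent sum rewards from the reverse prompt game via cat/cut routing, composition, and zero-summing, then obtaining \emph{ensure} constraints via adversarial rescaling. Your additional observation --- that the signed xent sum must be consolidated into a single reward \emph{before} rescaling, since rescaling term-by-term would yield a strictly stronger conjunction of per-term constraints rather than the single constraint $\sum_i\sigma_i\,xent(s_i|t_i)\geq0$ --- is a genuine subtlety that the paper's proof passes over silently, and is worth making explicit.
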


\begin{proof}
It is obvious from its design (in Section \ref{subsec:xent-games-context-moves-constraints-rewards})
that the space of Xent Games is stable under cat/cut updates, sequential
composition, adversarial rescaling and zero-summing, and hence $\Sigma\subset\text{Xent\,\,Games}$.
To show the reverse inclusion $\text{Xent\,\,Games \ensuremath{\subset} }\Sigma$
we must show that we can build any Xent Game from the properties of
$\Sigma$. This is guaranteed by following observations:
\begin{itemize}
\item From the reverse prompt game, the string update abilities and the
composition property, any reward of the form $\sum_{i=1}^{n}xent\left(s_{i},t_{i}\right)$
is allowed for $s_{1},\ldots,s_{n},t_{1},\ldots,t_{n}$ in string
space to any player (string swapping is allowed by the cat/cut support,
player swapping is allowed by composition stability).
\item From the zero-summing property, this leads to signed rewards $\sum_{i=1}^{n}\sigma_{i}xent\left(s_{i},t_{i}\right)$
with $\sigma_{i}\in\left\{ \pm1\right\} $ being allowed in games
of $\Sigma$. 
\item From the adversarial rescaling property, any reward can be transformed
into a constraint: thus any xent constraint is allowed for games in
$\Sigma$. 
\item Obviously, cat/cut string updates are supported by assumption.
\end{itemize}
Thus, any Xent Game can be obtained by composition, cat/cut updates,
adversarial rescaling, and zero-summing from the reverse prompt game,
yielding $\text{Xent\,\,Games \ensuremath{\subset} }\Sigma$, as desired. 
\end{proof}
From the same reasoning, we obtain the following:
\begin{cor}
\label{cor:xent-games}The space of Xent Games is the smallest family
of imperfect information games satisfying the Xent Game properties
(reverse-prompt, cat/cut stability, and stability under sequential
composition, adversarial rescaling and zero-summing) allowing for
partial information disclosure of moves and other player's rewards
to the other players.
\end{cor}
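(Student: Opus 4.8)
The plan is to mirror the structure of the proof of Theorem \ref{thm:characterization-perfect-information-xent-games}, treating the imperfect-information case as a strict extension of the perfect-information one. The corollary asks us to characterize the full space of Xent Games (as defined in Section \ref{subsec:xent-games-context-moves-constraints-rewards}, which already allows imperfect information via selective \emph{reveal} calls) as the smallest family satisfying all the previously-established Xent Game properties, with the additional operation of allowing some moves and some players' rewards to be hidden from other players. So I would set up a family $\Sigma'$ defined as the smallest family containing the reverse-prompt game, supporting cat/cut updates, stable under composition, adversarial rescaling, and zero-summing, and additionally closed under the operation of partial information disclosure, and argue $\Sigma' = \text{Xent Games}$ (full, imperfect-information version).

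First I would establish the inclusion $\Sigma' \subseteq \text{Xent Games}$: the full space of Xent Games is closed under all five properties (by exactly the same design observations as in the theorem, since the perfect-information reasoning carries over verbatim), and it is additionally closed under partial information disclosure, because the game specification in Section \ref{subsec:moves-and-constraints} explicitly permits revealing only a subset of the string space and a subset of other players' rewards at each elicit. Hence every generator and every closure operation of $\Sigma'$ keeps us inside the Xent Game space, giving one inclusion immediately.

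For the reverse inclusion $\text{Xent Games} \subseteq \Sigma'$, I would argue in two stages. The key observation is that the \emph{information structure} of a Xent Game is orthogonal to its \emph{reward/constraint structure}: any imperfect-information Xent Game $G$ is obtained from an underlying perfect-information Xent Game $G^{\mathrm{perf}}$ (the same sequence of elicits, rewards, constraints, and cat/cut updates, but with all information revealed) by applying partial information disclosure to restrict what each player sees at each elicit. By Theorem \ref{thm:characterization-perfect-information-xent-games}, $G^{\mathrm{perf}}$ lies in the smallest family stable under composition, adversarial rescaling, and zero-summing containing the reverse-prompt game — and this family is contained in $\Sigma'$ (since $\Sigma'$ has all those closure properties and the same generator). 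Then a single application of the partial information disclosure closure takes $G^{\mathrm{perf}}$ to $G$, placing $G$ in $\Sigma'$. This gives $\text{Xent Games} \subseteq \Sigma'$ and hence equality.

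The main obstacle I expect is the interaction between information disclosure and the other closure operations, specifically making precise that every imperfect-information game genuinely arises as a disclosure-restriction of a perfect-information game in a way compatible with the order of operations. One must check that hiding information does not alter which reward and constraint steps are syntactically available — i.e. that disclosure acts purely on the \emph{reveal} metadata of each elicit and commutes appropriately with composition, so that ``build the perfect-information skeleton, then restrict visibility'' yields exactly the target game rather than a strictly larger or incomparable one. This is essentially a bookkeeping argument — confirming that the move-feasibility and reward logic depend only on the string space and not on who sees what — but it is the only genuinely new content beyond the theorem, and it is where one must be careful that the minimality claim (smallest such family) is not compromised by disclosure introducing games outside the intended space.
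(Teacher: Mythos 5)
Your proposal is correct and follows essentially the same route as the paper, which gives no separate argument for the corollary beyond asserting it follows ``from the same reasoning'' as Theorem \ref{thm:characterization-perfect-information-xent-games}; your two-inclusion argument, with the decomposition of an imperfect-information game into a perfect-information skeleton followed by a disclosure restriction, is precisely the intended (but unwritten) content of that remark.
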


\subsection{\label{subsec:xgl-xent-game-language}XGL: Xent Game Language}

By their sequential design, Xent Games naturally lend themselves to
a procedural description. In this subsection, we define a domain-specific
language allowing one to represent a Xent Game as a program composed
of simple instructions. This language, which we call XGL (Xent Game
Language) is designed to maximize the simplicity of reading and defining
Xent Games, while at the same time maximizing the efficient automatic
creation of such games (see Section \ref{subsec:xgl-as-a-target-language}).
An implementation of XGL is available in the \href{https://github.com/xentlabs/xega/}{xega repository}
on GitHub. 

\subsubsection{\label{subsec:xgl-design-and-features}XGL Design and Features}

Following the Xent Game's operational description philosophy (Section
\ref{subsec:game-operations}), XGL programs are interpreted line
by line and globally follow a structure that is very similar to that
of an assembly language. The XGL design sacrifices a few features
from the full generality of the Xent Game's operational description
for concreteness, simplicity of writing, and execution safety:
\begin{itemize}
\item The string space is made of a fixed list of $32$ string registers
grouped in types $a,b,s,t,x,y,p,c$, with each type coming with $4$
registers: the $4$ $s$-type registers are $s,s0,s1,s2$, the $4$
$t$-type registers are $t,t0,t1,t2$, etc. The $p$-type and $a,b,c$-type
register have special rules:
\begin{itemize}
\item The $a$-type, $b$-type, and $p$-type registers are all public,
i.e. their values are known to all players at any time. 
\item The $a,b,c$-type registers contain the instance data and are constant,
i.e. their values cannot be modified during the game's operations,
they can only be modified by $config$ instructions. 
\end{itemize}
\item The set of players consists of $6$ pre-defined players ($black$,
$white$, $alice$, $bob$, $carol$, and $env$):
\begin{itemize}
\item The players $black$, $white$, and $env$ are omniscient: they have
access to all the data during the game. 
\item The players $black$ and $white$ are a zero-sum pair (see \emph{reward}
comments\emph{ }below).
\item The player $env$ does not receive rewards (rewards to $env$ are
set to zero), it only tries to follow constraints (it can be used
to elaborate complex games). 
\end{itemize}
\item The number of lines of an XGL game is capped to 64, which may however
include looping for a fixed number of steps (these are hence not `truly'
conditionals, as they could be realized by a pre-processor), leading
to a fixed maximal `unrolled' length of 1024 steps. Looping occurs:
\begin{itemize}
\item Via \emph{$beacon$} calls, one of two pre-defined flags ($flag\_1$,
$flag\_2$) can be placed at any line of code following the call;
by default the flags are at the line $1$, and $flag\_1$ must never
be after $flag\_2$)
\item Via \emph{$replay$} calls, which take as arguments a flag, and a
fixed maximal number of times one jumps back to the flag before continuing.
\end{itemize}
\end{itemize}
In addition, while we follow the procedural description of the operations
in Section \ref{subsec:game-operations}, the modus operandi is slightly
changed for simplicity and playability:
\begin{itemize}
\item The complexity of the \emph{elicit} calls, which include revealing
specific elements of information to the elicited player and imposing
\emph{ensure} statements to the elicited response, is flattened: 
\begin{itemize}
\item The information disclosed to the elicited player is given through
\emph{reveal} calls prior to the elicit call (on top of the public
registers being shared, and the omniscient players having access to
all information). 
\item The constraints are enforced by \emph{ensure} calls that follow an
\emph{elicit}; if the ensure condition fails, execution jumps back
to the last elicit call coming before the current ensure. 
\item For practical playability reasons (i.e. so that games don't stop too
often in practice), we allow each player a small number ($10$) of
ensure fails.
\end{itemize}
\item The \emph{reward} calls are slightly changed compared to the description
of Section \ref{subsec:game-operations}:
\begin{itemize}
\item As discussed above, $black$ and $white$ are a zero-sum pair: any
reward amount $\rho$ given to one player automatically comes with
negative reward amount $-\rho$ to the other one; either player can
be used for a one-player perfect information game (the rewards to
the other are simply discarded if they don't play). 
\item In addition to the values of the rewards being disclosed to the players,
for each \emph{xent} that is part of a reward, the corresponding \emph{atomic
xents $axent$} are shared with the player. If $s$ is $m$-token
long and $t$ $n$-token long, then $axent\left(s,t\right)$ is the
$m$-dimensional vector with the $i$-th coordinate of $axent\left(s,t\right)$
being defined as $-\log\mathbb{P}_{\mathcal{J}}\left\{ X_{n+i}=s_{i}|t_{1},\ldots,t_{n},s_{1},\ldots,s_{i-1}\right\} $. 
\end{itemize}
\item As discussed above, the string space operations are not allowed to
modify $a,b,c$-type registers. 
\begin{itemize}
\item The modification of the string registers is made by \emph{assign}
calls. 
\item The cat operation is denoted by a $+$ and the cut operations by $//$
and $\%$ as explained in Section \ref{subsec:string-space-and-operations}. 
\end{itemize}
\end{itemize}

\subsubsection{\label{subsec:xgl-register-and-instruction-set}XGL Register and
Instruction set}

As discussed above, XGL is an assembly-like language. Lines are executed
sequentially (top-to-bottom), with each line containing a single instruction.
XGL instructions update the registers and interact with the players.
The registers and pre-defined variables set are the following:
\begin{itemize}
\item The $20$ mutable string-space registers of types $s,t,x,y,p$.
\item The $12$ constant string registers of type $a,b,c$. 
\item The $6$ player variables: $black$, $white$, $alice$, $bob$, $carol$,
and $env$. 
\item The $2$ beacon registers: $flag\_1$ and $flag\_2$
\item The following variables are not directly usable, but updated and accessed
by the commands: the current line, the number of lines executed so
far, the number of inner repeats remaining, the number of outer repeats
remaining.
\end{itemize}
Each instruction corresponds to a line of XGL. The syntax is such
that each line can be interpreted as a Python $3+$ line (leveraging
some syntactic sugar). During the game runtime, we have the following
$8$-instruction set
\begin{itemize}
\item $elicit$: updates string space from player's moves. For instance
$elicit(x,20)$ asks for input of max length 20 from player $black$
(if not specified, the player is $black$) and stores the result into
$x$, and $elicit(alice,x1,x2,x3,10)$ asks for $3$ inputs $x1,x2,x3$
from $alice$ of lengths at most 10. 
\item $ensure$: enforce conditions on the player's moves (following an
$elicit$). The $ensure$ calls take a list of boolean expressions
as arguments, in particular those returned by $is\_true$ calls (see
Section \ref{subsec:xent-based-functions} below). For instance, $ensure(is\_true("num\_words<10",s))$
asks the model $\mathcal{J}$ to determine whether $s$ contains fewer
than $10$ words.
\item $reward$: rewards a player (by default $black$) by taking $xent$-based
signed sums (made of sums/differences of $xent,nex,xed,dex$, see
Section \ref{subsec:xent-based-functions} below) as inputs. For instance, 
\begin{itemize}
\item $reward(xent(s|t))$ gives $xent\left(s|t\right)$ to $black$ (the
default player),
\item $reward(xed(s|t))$ gives $xed(s|t)=xent(s)-xent(s|t)$ to $black$,
\item $reward(alice,dex(s|t))$ gives $-xed(s|t)$ to $alice$, 
\item $reward(black,nex(s|t))$ rewards $-xent\left(white,s|t\right)$ to
$black$ and $xent(s|t)$ to $white$ (since $black$ and $white$
are in a zero-sum coupling). 
\end{itemize}
\item $assign$: updates string space (from string space itself, via cat/cut).
For instance $assign\left(s=s1+s2\right)$ concatenates $s1$ and
$s2$ and puts the result into $s$, and $assign\left(s0=c0//s\right)$
cuts $c0$ at the first occurrence of $s$ and puts what comes before
into $s0$, while $assign(t0=c0\%s)$ puts what comes after into $t0$. 
\item $reveal$: marks string space registers to be shared with player (ahead
of an $elicit$). For instance $reveal(alice,s2+t2)$ shares the string
$s2+t2$ with Alice. 
\item $beacon$: plants a flag. The only two possible calls are $beacon(flag\_1)$
and $beacon(flag\_2)$. 
\item $replay$: jumps execution to a previous flag. For instance, $replay(flag\_1,10)$
jumps to $flag\_1$ at most $10$ times. 
\end{itemize}

\subsubsection{\label{subsec:xent-based-functions}Xent-based functions }

The instructions $reward$ and $ensure$ defined above are based on
$xent$ calls to the judge model $\mathcal{J}$. The $xent$ function
takes up to three arguments $s,t,o$ where $t$ and $o$ are by default
empty strings.
\begin{itemize}
\item We have $xent\left(s|t,o\right)=-\log\mathbb{P}_{\mathcal{J},o}\left\{ s|t\right\} $
where $o$ is a general set of customizable pre-instructions for $\mathcal{J}$
that that will come before the pre-prompt $t$. Note that $o$ must
be an in-line constant. By default, if $t$ is empty $xent\left(s,o\right)=-\log\mathbb{P}_{\mathcal{J},o}\left\{ s\right\} $;
if $s$ is empty, $xent\left(s|t,o\right)$ is zero. 
\end{itemize}
Similarly
\begin{itemize}
\item \textbf{\emph{$nex=-xent$}} is the negative \emph{$xent$.}
\item $xed(s|t,o)=xent(s,o)-xent(s|t,o)$ is the \emph{xent }difference:
it encodes how much information $t$ contains about $s$ (from the
point of view of the model $\mathcal{J}$).
\item \emph{$dex(s|t,o)=xent(s|t,o)-xent(s,o)=-xed(s|t,o)$} is the negative
$xed$. 
\end{itemize}
The above quantities can naturally be added and subtracted (but not
multiplied) before being submitted as a \emph{reward}. When they are
rewarded to a player, the detailed atomic \emph{axent} data is also
shared with the player (and all the omniscient players). 

The \emph{$ensure$} calls take booleans which can be the result of
explicit \emph{xent} comparisons (e.g. \emph{$ensure(xent(s|t)<xent(s))$})
or implicit ones, via built-in $is\_true$ and \emph{$is\_false$}
calls:
\begin{itemize}
\item \emph{$is\_true(\lyxmathsym{\textquotedblleft}statement\lyxmathsym{\textquotedblright},params)$}
compares the \emph{xent} of the tokens \emph{$\lyxmathsym{\textquotedblleft}true\lyxmathsym{\textquotedblright}$}
vs \emph{$\lyxmathsym{\textquotedblleft}false\lyxmathsym{\textquotedblright}$}
according to $\mathcal{J}$ after the content of \emph{$\lyxmathsym{\textquotedblleft}statement\lyxmathsym{\textquotedblright}$}
followed by the \emph{params }and a standardized pre-prompt. For instance:
\emph{$is\_true(\lyxmathsym{\textquotedblleft}num\_words<10\lyxmathsym{\textquotedblright},s)$}
can be implemented with a pre-prompt \emph{r} \emph{``Is the statement
`}$num\_words<10$\emph{' about the sentence ''} + \emph{s} + \emph{``
true or false? It is ''}, which is then used to compare the xent
of the \emph{$\lyxmathsym{\textquotedblleft}true\lyxmathsym{\textquotedblright}$}
versus that of \emph{$\lyxmathsym{\textquotedblleft}false\lyxmathsym{\textquotedblright}$}).
The result would be true if \emph{$\lyxmathsym{\textquotedblleft}true\lyxmathsym{\textquotedblright}$}
has a lower \emph{xent} than \emph{$\lyxmathsym{\textquotedblleft}false\lyxmathsym{\textquotedblright}$},
given the statement as a prefix.
\item $is\_false$ returns the opposite of $is\_true$. 
\item By default $ensure(statement)$ corresponds to $ensure(is\_true(statement))$
\end{itemize}
The \emph{$is\_true$} and \emph{is\_false} calls thus use the model
$\mathcal{J}$ as arbiter of the truth as far as satisfying the constraints.
Note that in practice, some truth statements can be implemented otherwise
(e.g. using some hard-coded functions) for efficiency without changing
the game's logic; from a theoretical standpoint, the important point
is that the statements can be arbitrated in principle by the judge
model $\mathcal{J}$. 

\subsubsection{\label{subsec:xgl-metadata-and-pregame-code}XGL Metadata and Pre-Game
Code}

Most of the XGL metadata consists of data to link $\mathcal{J}$ to
a specific LLM, and to link the players to various LLM agents, as
discussed Xent Game description above (Section \ref{subsec:game-metadata}).
This data is set in a dictionary (if modifications to default values
are needed) and it includes the prompts to fill the \emph{a}, \emph{b},
\emph{c} registers. The only part of the metadata that is absent is
the session seed, which is set externally. Like the code, the metadata
is shared with all players. 

The only difference with the list of Section \ref{subsec:game-metadata}
is that much of the metadata described there appears in-line in the
game code via the use of quote-marked string constants. This choice
is to made to improve readability for the players.

One important design element is that the metadata can be sequentially
defined by several dictionaries, where each dictionary can override
the variables defined by the dictionaries that came before in the
order presented below:
\begin{enumerate}
\item for the whole space;
\item for a game subspace (e.g. when running a benchmark);
\item for a specific game;
\item for a specific running instance (e.g. for the random seed).
\end{enumerate}
Note that the in-line metadata cannot be overridden with this scheme. 

\subsubsection{\label{subsec:xgl-as-a-target-language}XGL as a Target Language
for LLMs}

In addition to readability and ease of execution, XGL is designed
from the beginning to facilitate the \emph{automatic generation} of
games, which is instrumental in Section \ref{subsec:scope-growth}
below to \emph{discover new games} relevant to measuring general model
abilities. Put simply, XGL is designed to be a good target language
for LLM generation. The structure of the language is designed to help
with the spontaneous discovery of new games by abiding by the following
principles:
\begin{itemize}
\item An XGL program is valid if and only if all its lines are individually
valid XGL lines. This allows one to e.g. `cross-breed' games.
\item The structure of the $reward$ and $ensure$ are similar and they
both take xents as inputs. 
\item All the variables are named in advance and they are string typed only. 
\end{itemize}

\subsection{\label{subsec:xent-graphical-language}Graphical Representation:
Xent Games as Diagrams}

In order to study simple Xent Games, especially the ones written in
XGL, it is often convenient to express their operational logic via
graphical diagrammatic representations: this is the way that the examples
of Section \ref{sec:xent-game-examples} are represented. 

In this subsection, we present the Xent Game Diagram (XGD) representation
scheme that makes simple games particularly easy to grasp: 
\begin{itemize}
\item This scheme is useful (for humans) to understand the content of a
game, and to create new games as well. 
\item It makes the games with \emph{$black$} and \emph{$white$} particularly
easy to draw. 
\item It is also a good way to grasp the nature of the Xent Game space,
in particular the interactions between the string space operations,
the xent operations, and the player's moves are much easier to understand. 
\item While the assembly-like nature of XGL is usually easy to follow, the
graphical representation eliminate some redundancies involved with
the notation (e.g. swapping the name of two registers leaves a game
unchanged, it is not clear which registers are in fact used).
\end{itemize}
The philosophy of XGD is quite simple and in some sense closer to
the theoretical move description (see Section \ref{subsec:moves-and-constraints}).
The XGD representation is based on symbols linked via edges of various
stroke styles, surrounded by constraint boxes. See Figure \ref{fig:xgd-primitives}
for a reference of the symbols, and Figures \ref{fig:xgd-human-friendly-1p-games},
\ref{fig:xgd-1p-xent-games}, \ref{fig:xgd-interception-games}, \ref{fig:xgd-naive-chess},
and \ref{fig:xgd-secret-sharing} for some simple examples of Xent
Games. Note that it is easier to understand XGD by looking at these
examples first. 

\begin{figure}
\includegraphics[scale=0.8]{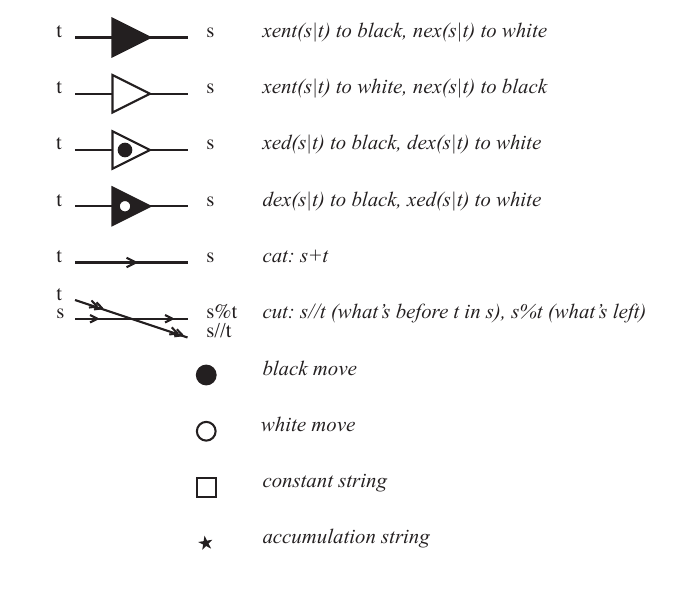}\caption{\label{fig:xgd-primitives}XGL primitives represented in XGD}
\end{figure}

\subsubsection{Colors and String Space Variables}
\begin{itemize}
\item Associated with each player is a \emph{color} and a \emph{negative
color}:\emph{ }the players \emph{$black$, $white$, $alice$, $bob$,
$carol$, }$david$, and\emph{ env} have the (positive) colors \emph{$black$,
$white$, $red$, $green$, $blue$,} $and$ \emph{purple} respectively,
and they have the negative colors \emph{$white$, $black$, $cyan$,
$magenta$,} \emph{$yellow$, }and \emph{$olive$} respectively. 
\item Each string space variable state is represented by a node: 
\begin{itemize}
\item A colored circle $\circ$ for \emph{$elicit$} inputs, with a number
inside the circle representing max length
\item A square ${\small \square}$ for instance-set constant strings.
\item A star $\star$ for intermediate variable assignments. 
\end{itemize}
\end{itemize}

\subsubsection{Xent Symbols}

The \emph{xent symbols} are the symbols associated with \emph{xent}-based
functions \emph{$xent$, }$nex$, $xed$, and $dex$. The xent symbols
are each represented by a colored triangle-like symbol, with each
color (positive or negative) corresponding to the players.
\begin{itemize}
\item For the xent symbols that are not in an \emph{$ensure$} box (see
below):
\begin{itemize}
\item a black triangle represents \emph{$xent$} awarded to \emph{$black$}
and $nex$ to \emph{$white$} (since \emph{$black$/$white$} are
in zero-sum pairing); a white triangle (with a black boundary, for
readability) represents \emph{$nex$} to \emph{$black$} and \emph{$xent$}
to $white$.
\item a white triangle with a black dot represents \emph{$xed$} awarded
to \emph{$black$} and \emph{$dex$} to $white$; a black triangle
with a white dot represents \emph{$xed$} awarded to $white$ and
$dex$ to $black$.
\item the orientation of the triangle is from the prefix string to the string
whose xent function is computed, i.e. a black triangle points from
\emph{$t$} to $s$, \emph{$xent(s|t)$} is awarded to \emph{$black$}
and \emph{$nex(s|t)$} to $white$. 
\end{itemize}
\item Each xent symbol in an $ensure$ box is colored similarly, and gives
a constraint on the move of the \emph{$elicit$} contained in the
same box: the sum of the xent symbols in the box must be positive
for the $ensure$ to pass. 
\item When needed, each xent symbol comes with a number written on its bottom
right, specifying the order in which it plays a role. 
\item Each xent symbol is associated with a player via their color and negative
color (see Figure \ref{fig:xgd-primitives}). 
\end{itemize}

\subsubsection{Elicit and Reveals}

As discussed above $elicit$ instructions are represented by circles.
\begin{itemize}
\item The colors refer to the elicited player's (positive) color, e.g. $\bullet$
for $black$ and $\circ$ for $white$. 
\item Inside of the elicit circles, numbers can be written, denoting the
maximal length of the elicit.
\item The revealed information, if we elicit a non-omniscient player, is
represented by star of the relevant color appearing to the left (i.e.
having been seen) of the $elicit$ box.
\item The numbers on the bottom right, that specify the order of operations,
are particularly important for elicits.
\item In case of an imperfect-information game, undirected edges link the
elicit box to the various strings shared with the player being elicited.
\end{itemize}

\subsubsection{Ensure Boxes}

The $ensure$ constraints are represented by referring to elicit symbols.
An $elicit$ can have several corresponding \emph{ensure} statements;
in the case an $ensure$ contains several $elicit$ statements, the
$elicit$ statement which execution jumps back to if the $ensure$
fails is surrounded by a circle. 
\begin{itemize}
\item Either include xent signed rewards with the colors associated with
the player being elicited (as above). 
\item Truth conditions to be judged by the $\mathcal{J}$ model. These are
typically written in natural language, and may refer to local variables
denoted by $\star$ symbols. 
\end{itemize}
The $ensure$ boxes are implicitly linked by strokes to at least one
$elicit$ box, and if they fall back on the last elicit if the $ensure$
fails. 

\subsubsection{Cat/Cut}

The string operations (in particular cat/cut) are represented as follows:
\begin{itemize}
\item The concatenation of two strings (with a default spacing character
between them) is represented by an edge linking the two corresponding
nodes oriented with an arrow symbol, determining the order of the
concatenation ($s\to t$ represents $s+t$, i.e. $s$ followed by
$t$).
\item Each cat/cut operation corresponds to a certain \emph{stroke type}
(e.g. regular, dashed, wiggly, doubled, etc.)
\item The concatenation of $n$ strings $s_{1},\ldots,s_{n}$ is represented
by $n-1$ oriented edges $s_{j}\to s_{j+1}$ for $0<j<n$ that use
the same stroke type .
\item The \emph{cut} of a string $s$ by another $t$ is represented by
a single-arrow oriented edge being `intercepted' by a double-arrow
oriented edge (with same stroke type). The emerging double-arrow edge
carries $s//t$, while the emerging single-arrow edge carries $s\%t$.
\item A string can belong to several concatenations with several edges being
incident if they are represented using different stroke styles.
\item An unoriented edge with a certain stroke type `carries' the result
of the string operation `performed' by the corresponding stroke type
to another operation (e.g. a xent symbol). 
\item A string can be stored in a star if there is an unoriented edge `carrying'
the string towards the star; there should be exactly one unoriented
edge incident to a star.
\item Dotted (unoriented) edges carry values from a node to another, with
the orientation being from left to right. 
\end{itemize}

\subsubsection{Order of Operations}
\begin{itemize}
\item The order of operations ($assign$, $elicit$, $reward$) is dictated
(in case of ambiguity) by numbers at the bottom right of the boxes.
\item In case of repeats, a REPEAT($k$) marker is added to specify which
way to go in when repeating the first $k$ time, and an AFTER marker
to specify which way to go at the end of the repeat. 
\end{itemize}

\section{\label{sec:xent-game-examples}Xent Game Examples}

The goal of this section is to illustrate the richness of the space
of Xent Games introduced in Section \ref{sec:xent-games} above, through
a number of examples of interest. Note that the goal of this section
is mainly to illustrate the sophisticated nature of such games in
terms of the skills they require; while each example captures a diversity
of interesting features that are intuitively understandable, it is
not claimed that the set of games presented is in any sense exhaustive
or balanced as far as measuring the abilities of LLMs. In that regard,
it is also fundamental to understand that none of \emph{these games
are expected to be played optimally by any present or future model}
(or even less to be solvable analytically); inasmuch they are concerned
with model benchmarking, the goal of such games is merely to \emph{compare
the behaviors of various models} with respect to one another. 

This section is organized as follows:
\begin{itemize}
\item In Section \ref{subsec:one-player-combinatorial-games}, a number
of one-player perfect information games are presented: these are simply
discrete optimization problems associated with the Xent Measure of
the judge model $\mathcal{J}$.
\item In Section \ref{subsec:two-player-zero-sum-combinatorial-games},
a number of two-player perfect information zero-sum games are presented:
informally, these are minimax games, i.e. since each player's gain
is the other player's loss, theoretically the optimal play is to minimize
the other player's return over all their possible actions following
one's own action. 
\item In Section \ref{subsec:imperfect-information-games}, we present a
few multi-player general-sum imperfect information games, aimed at
highlighting a number of additional challenges associated with such
games. It should be noted that this class is extremely vast and that
the examples should not be expected to cover even a substantial fraction
of the additional challenges raised by such games. 
\end{itemize}

\subsection{\label{subsec:one-player-combinatorial-games}One-Player Combinatorial
Games}

One-player perfect information games are simply combinatorial optimization
problems involving the xent function. 

\begin{figure}
\begin{lstlisting}
assign(s=story())
elicit(t, 10)
ensure("no common words between"+s+"&"+t)
# asks for a nontrivial "summary" of s
reward(xed(s|t))
\end{lstlisting}

\begin{lstlisting}
assign(s1=story(),s2=story(),s3=story())
elicit(t, 10)
ensure("no common words between"+s1+s2+s3+"&"+t))
# asks for a nontrivial "joint summary" of s1, s2, s3
reward(xed(s1|t)+xed(s2|t)+xed(s3|t))
\end{lstlisting}

\begin{lstlisting}
assign(s1=story(), s2=story())
elicit(t, 10)
ensure("no common words between"+t+"&"+s1+s2)
reward(xed(t|s2+s1)+xed(s2|s1+t)+xed(s1|t+s2)) 
\end{lstlisting}

\caption{\label{fig:human-friendly-one-player-games}Three human-friendly one-player
Xent Games (played by $black$, in perfect information mode as per
the default)}

\end{figure}

\begin{figure}

\includegraphics[scale=3]{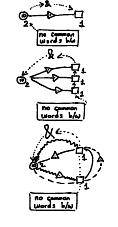}\caption{\label{fig:xgd-human-friendly-1p-games}XGD illustrations of the games
of Figure \ref{fig:human-friendly-one-player-games}}
\end{figure}

In Figure \ref{fig:human-friendly-one-player-games}, we present three
human-friendly optimization games which involve the $xed$ function
(with $xed\left(s|t\right)=xent\left(s\right)-xent\left(s|t\right)$).
Note that the first game leads to results that would plausibly be
a desirable output from an LLM while being quite hard to solve in
practice (taking a small temperature leads to a greedy approximation
of the solution). 

\begin{figure}
\begin{lstlisting}
assign(s=story())
elicit(t, 10)
assign(s1=s//t,s2=s%t)
# rewards the simplest cut possible 
# that makes what comes before
# given what comes after
reward(xed(s1|s2)-xent(t))
\end{lstlisting}

\begin{lstlisting}
assign(s=story())
elicit(t, 10)
assign(s1=s//t,s2=s%t)
# rewards the cut that makes s1 
# much more likely to follow s2 than vice versa
reward(xed(s1|s2)-xed(s2|s1)) 
\end{lstlisting}

\begin{lstlisting}
assign(s1=story(), s2=story())
elicit(t1, 10)
ensure("no common words between"+s1+s2+"&"+t1)
elicit(t2, 10)
ensure("no common words between"+s1+s2+"&"+t2)
# rewards maximally unrelated prefixes 
# that are good prefixed for both stories
reward(xed(s1|t1)+xed(s2|t2)+xed(s1|t2)+xed(s2|t1)-xed(t1|t2)-xed(t2|t1))
\end{lstlisting}

\begin{lstlisting}
assign(s=story())
elicit(t, 10)
# find a good prefix t for s 
# that is unlikely given s
reward(xed(s|t)-xed(t|s)-xent(t))
\end{lstlisting}

\begin{lstlisting}
assign(s1=story(), s2=story())
elicit(t1, 10)
ensure("no common words between"+t1+"&"+s1+s2)
elicit(t2, 10)
ensure("no common words between"+t2+"&"+s1+s2)
# make the story s1->t1->t2->s2->s1 maximally
# plausible (with the no common words constraint)
reward(xed(s1|t2+s2+t1)+xed(t2|s2+t1+s1))
reward(xed(s2|t1+s1+t2)+xed(t1|s1+t2+s2))
\end{lstlisting}

\begin{lstlisting}
assign(s1=story(), s2=story())
elicit(t1, 10)
elicit(t2, 10)
# find well-explained continuations t1, t2 that are not explaining
# such that t1 is an explanation for t2 but not vice versa
reward(xed(t1|s1)-xed(s1|t1)+xed(t2|s2)-xed(s2|t2))
reward(xed(t2|s1)-xed(s1|t2)+xed(t1|s2)-xed(s2|t1))
reward(xed(t2|t1)-xed(t1|t2))
\end{lstlisting}

\caption{\label{fig:1p-xent-games}Some simple examples of one-player combinatorial
Xent Games }
\end{figure}

In Figure \ref{fig:1p-xent-games}, we present a small collection
of games each of which comes with a slightly different interpretation,
despite fairly short codes and limited number of involved variables. 

\begin{figure}

\includegraphics[scale=2]{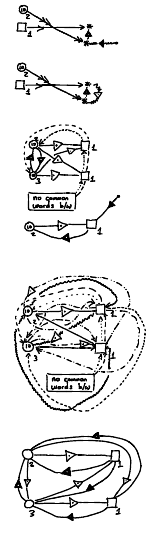}

\caption{\label{fig:xgd-1p-xent-games}XGD representations of the games of
Figure \ref{fig:1p-xent-games}}

\end{figure}

\subsection{\label{subsec:two-player-zero-sum-combinatorial-games}Two-Player
Zero-Sum Combinatorial Games}

\subsubsection{Interception/Selection Games}

\begin{figure}

\begin{lstlisting}
assign(s=story())
elicit(white, t, 20)
elicit(black, t1, 10)
ensure("No common words between" + t + "&" + t1)
# white tries to intercept a priori 
# the words that black could use
reward(black, xed(s|t1)+xed(t1|s))
\end{lstlisting}

\begin{lstlisting}
assign(s1=story(), s2=story())
elicit(white, t1, 10)
elicit(white, t2, 10)
elicit(black, t, 10)
# black (who receives the negative of white's score) 
# must try to "derail" the story 
# that white is trying to build, 
# while balancing local consistency
reward(white, xent(s1+t1+t+t2+s2)-xent(s1+t1+t)-xent(t+t2+s2)+xent(t))
\end{lstlisting}

\caption{\label{fig:interception-games}Interception Games}

\end{figure}

\begin{figure}
\includegraphics[scale=4]{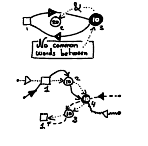}

\caption{\label{fig:xgd-interception-games}XGD representations of the games
of Figure \ref{fig:interception-games}}

\end{figure}

Some interesting classes of zero-sum perfect information games are
interception games, where one player must anticipate the other player's
moves and hinder their possibilities. 

\subsubsection{Chess}

Another interesting class of games are (variants of) sophisticated
games that are (typically) expected to be known to some degree by
the judge model $\mathcal{J}$. For instance, the game of chess is
in fact very easy to formulate as a Xent Game, given a sophisticated
enough judge model able to recognize valid chess moves. 

\begin{figure}
\begin{lstlisting}
assign(s="")
beacon(flag_1)
elicit(white, t, 4)
ensure("Is valid chess game, black to play or game over:" + s + t)
assign(s=s+t)
elicit(black, t, 4)
ensure("Is valid chess game, white to play or game over:" + s + t)
assign(s=s+t)
replay(flag_1, 20)
reward(white, xent("In the game" + s + "the most likely winner is black"))
reward(black, xent("In the game" + s + "the most likely winner is white"))
\end{lstlisting}

\begin{lstlisting}
assign(s="")
beacon(flag_1)
elicit(white, t, 6)
ensure("Is valid chess game, black to move or game over (specify winner):" + s+t)
assign(s=s+t)
elicit(black, t, 6)
ensure("Is valid chess game, white to move or game over (specify winner):" + s+t)
assign(s=s+t)
replay(flag_1, 5000)
reward(white, xed(s // "#", "white="))
reward(black, xed(s // "#", "black="))
\end{lstlisting}

\caption{\label{fig:naive-chess}Naive Chess and Chess}
\end{figure}

As described in Figure \ref{fig:naive-chess}, a naive way to play
chess using Xent Games is to play 20 moves and and use the judge model's
cross-entropy difference between the estimates for `white is winning'
vs `black is winning' to determine the winner. The judge must enforce
the validity of moves at every step. Alternatively, the full game
of chess can be implemented, but since the theoretical upper bound
on chess game lengths is around 5000, the game must be allowed to
run for all these steps. 

\begin{figure}
\includegraphics[scale=2.5]{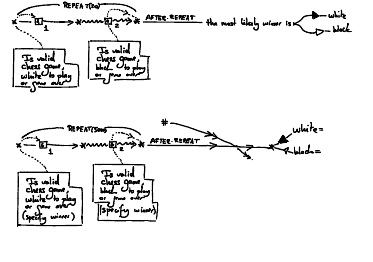}

\caption{\label{fig:xgd-naive-chess}XGD representations of the Naive Chess
and Chess games of Figure \ref{fig:naive-chess}}

\end{figure}

\subsubsection{Naive Sprig: Concise Mathematical Proof Debate}

Another interesting class of games revolves around (concise) mathematical
proofs. The naive idea is to build a proof that can be debated by
two parties (inspired by e.g. the Sprig protocol \cite{cghcl-i,cghcl-ii}).
Note that for this game to correspond in reality to true mathematical
debates, a very strong judge model would be needed. 

\begin{figure}

\begin{lstlisting}
assign(s=story("Get a provable elementary mathematical statement"))
assign(s3=story("Get three typical one-line math arguments"))
assign(s0="Here is a proof debate about the statement:" + s)
assign(t0="")
beacon(flag_1)
elicit(white, t, 100)
ensure(xent(t|s0)<xent(s3))
assign(s0=s0+"white proposes the following prove the last statement:" + t)
elicit(black, t0, 100)
elicit(black, s, 100)
ensure(xent(s|s0)<xent(s3))
ensure(s + " is a possible point of contention about the debate " + s0)
assign(s0=s0+"black raises a point of contention about " + s0)
replay(flag_1, 100)
elicit(black, t0, 100)
ensure(t0 + "is white's last answer about a point of contention in the debate")
reward(white, xent(t0 + "is a correct math proof w/o missing details: false"))
reward(black, xent(t0 + "is a correct math proof w/o missing details: true"))
\end{lstlisting}

\caption{Mathematical debate}

\end{figure}

\subsection{\label{subsec:imperfect-information-games}Imperfect Information
Games}

Beyond the perfect information setup, the imperfect information games
bring considerably more sophistication for the same space of allowed
moves. We present a few games that are illustrative of the challenges
associated with imperfect information, involving in particular information
sharing, coordination, guessing, and bluff. 

\subsubsection{Secret Sharing}

A simple example of an imperfect-information game is secret sharing
(Figure \ref{fig:secret-sharing-game}), where Alice must split a
secret into Carol's and David's share so that neither can find the
secret, but so that Bob, who has both shares, can find it. Notice
that these kind of problems have simple cryptographic solutions, but
that playing such a game with LLMs can still give interesting results. 

\begin{figure}
\begin{lstlisting}
assign(s=story())
reveal(s, alice)
elicit(alice, s2)
elicit(alice, s3)
reveal(bob, s2)
reveal(carol, s3)
reveal(david, s2)
reveal(david, s3)
elicit(david, t0)
elicit(bob, t2)
elicit(carol, t3)
reward(bob, xed(s|t2))
reward(carol, xed(s|t3))
reward(david, xed(s|t0))
reward(alice, xed(s|t0)+xed(s|t0)-xed(s|t2)-xed(s|t3))
\end{lstlisting}

\caption{\label{fig:secret-sharing-game}Secret Sharing Game}

\end{figure}

\begin{figure}
\includegraphics[scale=4]{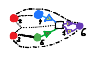}

\caption{\label{fig:xgd-secret-sharing}XGD representation of the Secret Sharing
Game of Figure \ref{fig:secret-sharing-game}}

\end{figure}

\subsubsection{Coordination}

Another example of an imperfect information game is coordination (Figure
\ref{fig:coordination-game}), where Bob and Carol have access to
the same information and must produce the closest possible guesses. 

\begin{figure}

\begin{lstlisting}
assign(s=story())
reveal(alice, s)
reveal(bob, s)
reveal(carol, s)
elicit(alice, t1, 10)
elicit(bob, t2, 10)
elicit(carol, t3, 10)
reward(alice, xed(s|t1)+xed(s|t1)-xed(t1|x)+xed(t1|t2)+xed(t1|t3)-xent(t1))
reward(bob, xed(s|t2)+xed(s|t2)-xed(t2|x)+xed(t2|t1)+xed(t2|t3)-xent(t2))
reward(carol, xed(s|t3)+xed(s|t3)-xed(t3|x)+xed(t3|t1)+xed(t3|t2)-xent(t3))
\end{lstlisting}

\caption{\label{fig:coordination-game}Coordination Game}

\end{figure}

\subsubsection{Guessing Games}

Finally, a very rich family of games are guessing games, where one
or several players try to learn information by repeated interactions
(see Figure \ref{fig:guessing-games}). 

\begin{figure}
\begin{lstlisting}
assign(s=story(8))
beacon(flag_1)
elicit(alice, t, 8)
reward(alice, xed(t|s))
replay(flag_1, 100)
\end{lstlisting}

\begin{lstlisting}
assign(s=story(), s0=story(4))
beacon(flag_1)
elicit(alice, t, 4)
reward(alice, xed(s|t))
reward(alice, -xed(s|t))
replay(flag_1, 100)
reward(alice, xed(t|s0)+xed(s0|t))
\end{lstlisting}

\begin{lstlisting}
assign(s=story())
reveal(alice, s)
elicit(alice, t, 10)
reveal(bob, t)
elicit(bob, s0, 10)
reward(bob, xed(s0|t0))
\end{lstlisting}

\begin{lstlisting}
assign(s=story(), s0=story())
reveal(alice, s)
reveal(alice, s1)
elicit(alice, t, 10)
reveal(carol, t)
elicit(carol, t0, 10)
# carol has the power to strip the message from its content
assign(t=t//t0) 
reveal(bob, t)
reward(alice, xed(s|t))
elicit(bob, t1, 10)
# alice and bob get double points for passing information about s and for s0
reward(alice, xed(s|t1)+xed(t1|s)+xed(s|t1)+xed(t1|s)+xed(s0|t1)+xed(t1|s0))
reward(bob, xed(s|t1)+xed(t1|s)+xed(s|t1)+xed(t1|s)+xed(s0|t1)+xed(t1|s0))
# carol is incentivized to let information about s pass
# she is doubly punished for information about s1 
reward(carol, xed(s|t1)+xed(t1|s)-xed(s1|t1)-xed(t1|s1)-xed(s1|t1)-xed(t1|s1))
\end{lstlisting}

\caption{\label{fig:guessing-games}Guessing Games}
\end{figure}

\section{\label{sec:xent-game-space-model-play-properties}Xent Game Space:
Model Play Properties}

Being made of text interactions mediated by LLM judges, Xent Games
introduced in Section \ref{sec:xent-games} are naturally designed
to be played by LLM-based agents. In this section, we focus on the
relationship between Xent Game play and LLMs. 

\subsection{\label{subsec:xent-game-eval-mode}Xent Game Eval Mode}

By definition, a Xent Game $G$, beyond its game operation code, involves
some metadata $D$ including `linking' metadata: 
\begin{itemize}
\item the specification of the judge model $\mathcal{J}$;
\item the NPC models linked to the set $\mathcal{A}\left(G\right)$ of\emph{
active }players (i.e. the ones which are involved in at least one
$elicit$ statement). 
\end{itemize}
When using a Xent Game to evaluate a model $\mathcal{M}$, the idea
is to `freeze' a game $\left(G,D\right)$, pick an active player $P\in\mathcal{A}\left(G\right)$
in the game $G$ and to consider the family of games $\left(G,D|_{P=\mathcal{M}}\right)$
where $D|_{P=\mathcal{M}}$ denotes the linking of $P$ with the model
$\mathcal{M}$, overriding the linking of $\mathcal{P}$ in $D$,
i.e. forcing the player $P$ to become the main character played by
$\mathcal{M}$. Informally, this corresponds to the following simple
idea: consider a game $G$, pick a player $P$, and measure how the
model plays the role of the `character' $P$.

Running a Xent Game in Eval Mode furthermore involves specifying a
random `map-seed' $\sigma$ used to set the instance random data (unless
we deal with a game with deterministic instances, e.g. Chess or Go),
and in the generation of the outputs of the players except the model
being evaluated; for the model being evaluated, we typically use a
different `play-seed' $\varsigma$. 

Hence, when discussing the (unnormalized) score of a given model $\mathcal{M}$
on a game $G$, we consider the averaging over random values $\sigma$
of the reward in the game $\left(G,D|_{P=\mathcal{M},\mathcal{S}}\right)$
where $P$ is overridden to be played by $\mathcal{M}$ and the seed
is set to random values $\sigma_{i}$ for $i=1,\ldots,N$:
\begin{equation}
S_{G}\left(\mathcal{M}\right):=\frac{1}{N}\sum_{i=1}^{N}S_{G}\left(G,\sigma_{i},\varsigma_{i}\right).\label{eq:average-score}
\end{equation}

\begin{rem}
This definition is closely related to that of the $arms$ definition
given in Section \ref{subsec:playability-few-shot-and-fine-tuning-definitions}
below. It is important in practice, for us to not let $N\to\infty$
(which would yield an expectation), as this leads to undesirable features
(see Remark \ref{rem:why-not-expectation}). 
\end{rem}

\begin{rem}
In order to allow for unified normalization across various games,
game maps, and game settings, the game scores should be normalized
using a base model (see Section \ref{subsec:score-normalization}
below). 
\end{rem}

\subsubsection{\label{subsec:game-representation-and-variance}Game Representation
and Variance}

When averaging over map seeds in practice, it is important to take
into account the variance of rewards with respect to the map seed;
this sometimes singles out certain formulations of games that otherwise
have the same optimal solutions and strategies. 

For instance, the basic reverse prompting game (Example \ref{exa:reverse-prompt}
above) admits the following equivalent representations: when the reward
$-xent\left(s|t\right)$ is given to $white$ and when $xed\left(s|t\right)=-xent\left(s|t\right)+xent\left(s\right)$
is given to $white$: the scores as a function of the move $t$ just
differ by a constant. The latter formulation is typically to be preferred,
as its variance with respect to the randomness of $s$ (induced by
the map-seed $\sigma$) is lower; the reliance of the game designs
presented in Section \ref{sec:xent-game-examples} upon the $xed$
function is largely motivated by this consideration.

\subsubsection{\label{subsec:game-well-posedness}Well-Posedness}

A notion that is related to the one discussed in Section \ref{subsec:game-representation-and-variance}
above is that of \emph{well-posedness}. This can be viewed as an analog
to the notion of boundedness in optimization: trying to maximize a
function that is not bounded from above cannot yield meaningful results,
even if, in practice, values may be bounded by machine precision (whatever
result we will get will reflect something about the machine running
the computations rather than about the problem). In the setting of
Xent Games, rewards on any game instance are naturally bounded by
the bounded sizes of the strings, but that does not imply solutions
are anyhow meaningful, as they may be more reflective of irrelevant
model artifacts rather than anything else. 
\begin{rem}
\label{rem:ill-posed-xent-game}A good example of this is the simple
game that consists in maximizing $xent\left(s\right)$ over all strings
$s$ of a certain length: this leads the model to `tap' into words
with extremely low probability whose estimates have usually no reason
to have been precisely calibrated, and the results will typically
look like random text.
\end{rem}

Precisely defining the notion of well-posedness for Xent Games is
in fact not trivial: as much as it is easy to come up with examples
of ill-posed games, such as the one of Remark \ref{rem:ill-posed-xent-game},
distinguishing between well-posed and ill-posed games can be difficult.
A simple intuitive criterion is that gameplay should not be affected
by very small probabilities, i.e. that if we put a little bit of noise
in the probability vector, the game-play should not be substantially
altered (either by the modification of allowed moves or by the scoring
of the outcome). While this could be formulated into a precise definition,
the following is essentially equivalent and much simpler in practice:
\begin{defn}
\label{def:well-posedness}For $\epsilon>0$, we say a Xent Game $G$
(in eval mode) is $\epsilon$\emph{-well-posed} if the main player's
optimal strategy is not affected by $\epsilon$-clipping of probabilities,
i.e. replacing $x\mapsto-\log x$ by $x\mapsto-\max\left(\log x,\log\epsilon\right)$
in the cross-entropy computation.
\end{defn}

\begin{rem}
In practice, a reasonable (though fairly arbitrary) value we will
take for $\epsilon$ is $1/|\mathcal{V}|$, i.e. we will disregard
games where optimal play relies on tokens that are less likely than
what a random uniform guess would give. 
\end{rem}

\begin{rem}
By this definition, it is not hard to see that the game of Remark
\ref{rem:ill-posed-xent-game} is ill-posed, as is the paradoxical
game of Example \ref{exa:paradox} (when in eval mode with $white$
or $black$). 
\end{rem}

\subsubsection{\label{subsec:game-repetition}Game Repetition}

An important feature of learning to play games is repetition, i.e.
learning from mistakes, and more generally understanding the interplay
between a game play history and the rules of the games to optimize
one's strategy. A useful setting in practice is to allow the to replay
of games by the main character played by the model $\mathcal{M}$,
while keeping the \emph{same map-seed} and the \emph{same play-seed}
over the various replays, giving the past games as context to $\mathcal{M}$,
while keeping no past game context to the other NPC agents. This setup,
which advantages $\mathcal{M}$ over the NPC players, can be expected
to give an increased score as the attempts are repeated for a reasonable
class of games: as discussed in Section \ref{subsec:few-shot-definition-playability}
below, we call these the \emph{few-shot playable games}.

\subsubsection{\label{subsec:game-fine-tuning}Game Fine-Tuning}

Another important way in which we can see a model getting better at
games is by processing examples and adjusting its parameters on them,
i.e. to perform some fine-tuning based on reinforcement learning principles.
We will abstract the many subtle details involved with such a process
by merely assuming that we are given, for some model $\mathcal{M}$,
a process $\mathcal{F}$ that, given a game $G$ (with an assignment
of $\mathcal{M}$ to some player $P$ in $G$), outputs a model $\mathcal{F}_{G}^{\mathcal{T}}\mathcal{M}$
corresponding to `$\mathcal{M}$ fine-tuned to play better at $G$'
for some time $\mathcal{T}$. 

While we intentionally avoid setting specific details about the process,
it is useful to take as an example the one developed in Section \ref{subsec:transfer-value-numerical-example}
below. Note that we do not require that the model has actually gotten
better at $G$: as will be discussed in Section \ref{subsec:playability-few-shot-and-fine-tuning-definitions}
below, this corresponds to the family of \emph{fine-tuning playable
games}. 

It should be noted that a fine-tuning mechanism should not be assumed
to exist for all models $\mathcal{M}$: for instance, some closed
models are only available in inference mode and not be fine-tunable.
In this paper, we simply required the existence of a sufficiently
strong base model $\mathcal{M}_{\mathcal{B}}$ for which a fine-tuning
mechanism is available that makes enough games playable (see Section
\ref{sec:evolution-in-game-space} below). 

\subsection{\label{subsec:playability-few-shot-and-fine-tuning-definitions}Playability:
Few-Shot and Fine-Tuning Definitions}

Central to our investigation of well-posed Xent Games is the notion
of \emph{playability}, which informally corresponds to a quality of
certain games that models can get improved performance on, either
by repeated play context (few-shot definition) or by reinforcement
learning (fine-tuning definition). This should be thought of in opposition
to non-playable games, which contain, for instance, as a non-trivial
example, the breaking of a secure code (even with a small secret length),
as one either succeeds or fails and there is no intermediate reward
and no sense that we get closer to succeeding.

\subsubsection{\label{subsec:few-shot-definition-playability}Few-Shot Definition}

A remarkable ability of modern LLMs is their so-called few-shot learning
abilities \cite{openai-2020,kojima-gu-reid-matsuo-iwasawa}: from
a limited number of feedback examples directly put in their context
window, models are often able to substantially improve their responses.
The notion of few-shot playability by a model $\mathcal{M}$ refers
to those games where the few-shot abilities are reflected in game-play: 
\begin{defn}
\label{def:few-shot-playable-game}For a game $G$ and $N$ random
samples of map seeds and play seeds, we define the average running
max score $\mathrm{arms}_{N,k}\left(G,\mathcal{M}\right)$ as the
(random) sequence defined by
\[
\mathrm{arms}_{N,k}\left(G,\mathcal{M}\right)=\frac{1}{N}\sum_{\ell=1}^{N}\max_{j=1,\ldots,k}\left(S_{j}^{\left(\ell\right)}\right),
\]
where $S_{j}^{\left(\ell\right)}$ is the score at the $j$-th iteration
over the sample $\ell$. 

A well-posed game $G$ is $\left(N,\epsilon,m\right)$-playable in
few-shot mode by a model $\mathcal{M}$ if, with probability $1$-$\epsilon$,
the score $\mathrm{arms}_{N,k}\left(G,\mathcal{M}\right)$ is strictly
increasing for at least $m$ steps. We say that a playable game is
$\left(N,\epsilon,\theta,m\right)$ playable if there exists $m$
thresholds $k_{1}<\cdots<k_{m}$ such that $\mathrm{arms}_{N,k}\left(G,\mathcal{M}\right)$
exceeds $b+j\theta$ for the first time at $k_{j}$ for $j=1,\ldots,m$
where $b$ is the expected score at the first iteration. 
\end{defn}

In practice, we will assume that some values of $N,\epsilon,m$ are
set. In the numerical experiments below (Section \ref{subsec:numerical-example-benchmarking}),
we take $N=20$ and $m=30$.
\begin{rem}
\label{rem:why-not-expectation}This definition means that, in practice,
averaging over a sufficiently large number of samples, we do see the
running max score increase substantially for a while (i.e. at every
step, the model has done better in one sample). We choose this definition
rather than that of an expectation as the latter is sensitive to very
unlikely outcomes (e.g. breaking a secure cipher is not a playable
game, yet the probability of cracking it is technically nonzero at
every step). 
\end{rem}

As we will see in Section \ref{sec:xent-game-measures}, few-shot
playability is an important a priori property for benchmarking: the
benchmarking score of a game will be defined relative to the times
when a base model $\mathcal{M}_{\mathcal{B}}$ crosses $\theta$-levels
for a game's repeated play. 

\subsubsection{\label{subsec:numerical-example-benchmarking}Numerical Example:
Benchmarking with Single-Text}

As an example, the single-text game presented in Figure \ref{fig:human-friendly-one-player-games}
is a good example of a playable game that can be used to compare various
model performances. Our results confirm that current state-of-the-art
LLMs are indeed capable of playing a Xent Game written in XGL and
playable in the sense of Definition \ref{def:few-shot-playable-game}:
they tend to improve over iterations. 

The plots in Figure \ref{fig:performance-4-flagship-models} show
the performances of GPT-4.1, Claude Opus 4, Gemini 2.5 Pro, and Grok
3 when playing the single-text game posed in the Xent Game Language
in an iterated scenario using GPT2 as the judge model. Each test case
is the single text game with a different generated string. While these
specific test cases show clear improvement across the set of test
models, individual game map results are more mixed, showing flat or
even occasionally negative performance as iterations increased. Our
results are obtained from the code available in the \href{https://github.com/xentlabs/xega/}{xega repository}
on GitHub. 

That being said, the results clearly demonstrate that current models
are able to improve their performance on a well-posed game when given
the chance to provide new answers after seeing the results of their
previous choices.

\begin{figure}
\includegraphics[scale=0.4]{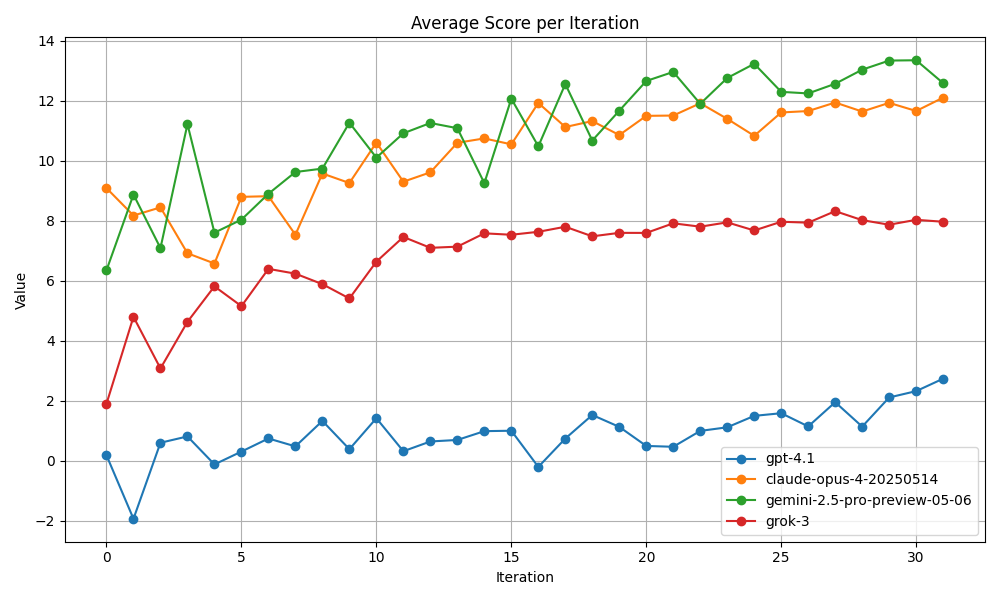}

\includegraphics[scale=0.4]{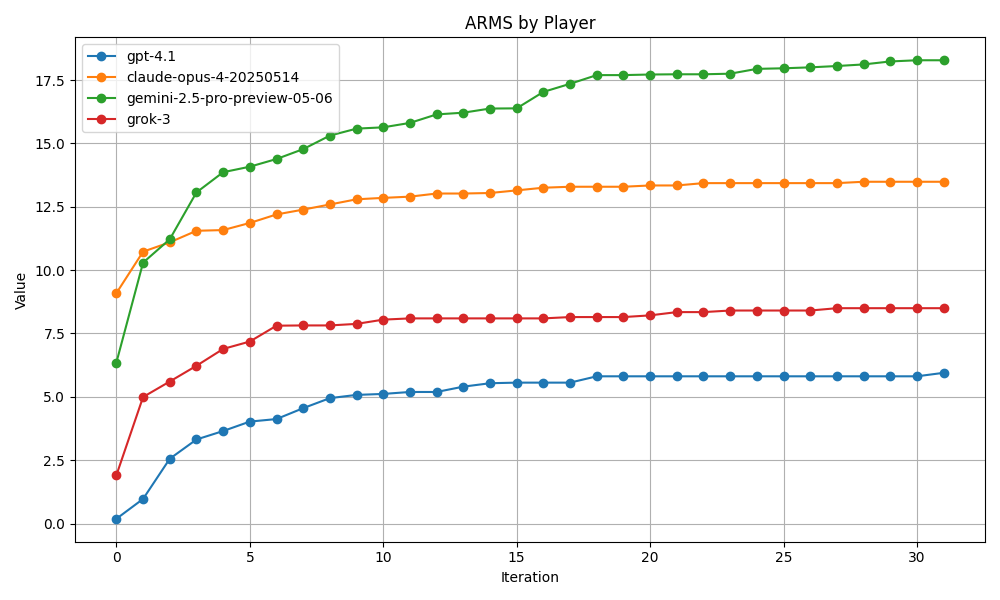}

\caption{\label{fig:performance-4-flagship-models}Performances for four flagship
models playing the single text Xent Game (with $N=20$, running for
$32$ iterations): the top plot shows the averaged scores (see Expression
\ref{eq:average-score}) and the bottom plot shows the $arms$ (see
Definition \ref{def:few-shot-playable-game}). We see that the $arms$
keeps improving for all models up to $k=17$. }
\end{figure}

\subsubsection{\label{subsec:fine-tuning-definition-playability}Fine-Tuning Definition}

The other, somehow deeper and more general, notion of playability
pertains to the ability of models to get better via fine-tuning. It
should be noted that closed models are not always available in fine-tuning
mode, and so this notion is strictly less applicable than the few-shot
playability one:
\begin{defn}
\label{def:fine-tuning-playble-game}A well-posed game $G$ is playable
in fine-tuning mode by a model $\mathcal{M}$ if, for some time $\mathcal{T}>0$,
the model $\mathcal{F}_{G}^{\mathcal{T}}\left(\mathcal{M}\right)$
is strictly better in expectation at $G$ than the model $\mathcal{M}$.
\end{defn}

\begin{rem}
It is generally expected that if a game is playable in fine-tuning
mode, it is also playable in few-shot mode.
\end{rem}

As we will see in Section \ref{subsec:transfer-value}, fine-tuning
playability is crucial for us to be able to relate games with one
another via the so-called transfer value. This notion of transfer
value allows us to build a geometric structure on the space of Xent
Games, which we rely upon to construct relevant Xent Measures. 

\subsection{\label{subsec:transfer-value}Transfer Value}

As discussed in Section \ref{subsec:playability-few-shot-and-fine-tuning-definitions},
a playable game is one where a player's strategy can be adjusted by
experience, leading to a higher score; in other words, the model $\mathcal{M}$
is \emph{learning} something. But is it \emph{worth} playing the game,
i.e. is the \emph{learned} information \emph{valuable}? Surely, this
is, in large part, very subjective. In this subsection, we will define
the value of learning to play a game in terms of the increased abilities
of an agent fine-tuned to play that game. This, of course, shifts
the subjectivity further: what is the end-goal in terms of game playing
utility? We defer the investigation of this latter problem to Section
\ref{sec:xent-game-measures}.

\subsubsection{\label{subsec:didactic-value-of-a-game}Transfer Value of a Game}

From the fine-tuning mechanism defined in Section \ref{subsec:fine-tuning-definition-playability},
we can now ask the question: how much does playing a game for a bit
improve an agent's skill on another game? This leads to define the
\emph{value} of a playable game:
\begin{defn}
Given a model $\mathcal{M}$ and a fine-tuning mechanism $\mathcal{F}$
and a fine-tuning time $\mathcal{T}>0$ (i.e. a token processing budget),
we define the\emph{ transfer value} (relative to $\mathcal{F)}$ of
a playable Xent Game $G_{1}$ for another playable game $G_{2}$ by
\[
\mathcal{V}_{G_{1}}\left(G_{2}\right)=\frac{S_{G_{2}}\left(\mathcal{F}_{G_{1}}^{\mathcal{T}}\left[\mathcal{M}\right]\right)-S_{G_{2}}\left(\mathcal{M}\right)}{S_{G_{1}}\left(\mathcal{F}_{G_{1}}^{\mathcal{T}}\left[\mathcal{M}\right]\right)-S_{G_{1}}\left(\mathcal{M}\right)},
\]
where $S_{G}$ denotes the expected score of a model when playing
$G$ (averaged over seeds).
\end{defn}

\begin{rem}
\label{rem:transfer-value}The transfer value should simply be understood
as `how much value does playing $G_{1}$ bring towards playing $G_{2}$?'
(normalized by how long one plays $G_{1}$). Note that there is no
reason to expect symmetry, i.e. $\mathcal{V}_{G_{1}}\left(G_{2}\right)$
is generally not equal to $\mathcal{V}_{G_{2}}\left(G_{1}\right)$. 
\end{rem}

\begin{rem}
As much as we will often take for granted that $\mathcal{M},\mathcal{F}$,
and $\mathcal{T}$ have been suitably set, working with transfer values,
it is important to remind ourselves that $\mathcal{V}$ depends a
priori on them, and that a `good setting' for $\mathcal{M},\mathcal{F},\mathcal{T}$
has been found. It is however important to keep in mind that we can
still expect that the \emph{final outputs} of our constructions (see
Sections \ref{sec:xent-game-measures} and \ref{sec:evolution-in-game-space}
below) to not depend on $\mathcal{M},\mathcal{F},\mathcal{T}$: we
expect that all `good' $\mathcal{M},\mathcal{F},\mathcal{T}$ settings
give roughly the same general ability measures (see Section \ref{subsec:universality-do-details-matter}
for a discussion of this universality idea).
\end{rem}

\begin{rem}
The idea of transfer learning across games appears in a number of
places in the literature, see e.g. \cite{banerjee-stone,parisotto-ba-salakhutidnov,textworld};
generally speaking it is reasonable to expect similar games to be
related. It is also interesting to note that nonzero transfer value
between an imperfect information game and its perfect information
counterpart (i.e. where all information is revealed) can be seen experimentally
(see \cite{bluem-czech-kersting}).
\end{rem}

While the transfer value of a game depends on the many details abstracted
in the $\mathcal{F}$ mechanism (and of the game metadata, in particular
the models linked to the judge and players), it is an interesting
question to ask how much the constructions we build from the transfer
value of a game depend on such details (see Section \ref{sec:evolution-in-game-space}). 

\subsubsection{\label{subsec:transfer-value-numerical-example}Transfer Value: Numerical
Example}

The transfer value concepts introduced above can be illustrated on
a simple triplet of games, shown in Figure \ref{fig:transfer-value-triplet}
(see also Section \ref{subsec:one-player-combinatorial-games} for
similar games).

\begin{figure}
\begin{lstlisting}
# Game Name: Single Text
assign(s=story())
elicit(t, 10)
ensure("no common words between"+s+"&"+t)
# asks for a nontrivial "summary" of s
reward(xed(s|t))
\end{lstlisting}

\begin{lstlisting}
# Game Name: Multi Texts
assign(s1=story(), s2=story(), s3=story())
elicit(t, 10)
ensure("no common words between"+s1+s2+s3+"&"+t))
# asks for a nontrivial "joint summary" of s1, s2, s3
reward(xed(s1|t)+xed(s2|t)+xed(s3|t))
\end{lstlisting}

\begin{lstlisting}
# Game Name: Dex Texts
assign(s1=story(), s2=story())
elicit(t, 10)
ensure("no common words between"+t+"&"+s1+s2)
# asks for a summary of s1, but anti-summary of s2
reward(xed(s1|t) + xed(s1|t) - xed(s2|t)) 
\end{lstlisting}

\caption{\label{fig:transfer-value-triplet}Triplet of one-player Xent Games
(played by $black$, in perfect information mode as per default)}
\end{figure}

These games, while clearly related, have distinct rules and strategies.
In particular, the third game utilizes a negative $xed$, meaning
that the playing agent is incentivized to find a prefix $t$ which
makes $s2$ unlikely.

Experimentally, we show the presence of transfer learning (i.e. nontrivial
transfer value) by performing three independent fine-tunings (one
on each of the three games) of a model (Qwen3 8B in FP8 in our experiment
with GPT2-XL as the judge model): for each of the three models being
fine-tuned, we evaluate their performance on all three games, as displayed
in Figure \ref{fig:evals-transfer-learning}. The fine-tuning procedure
$\mathcal{F}$ is performed using the REINFORCE algorithm \cite{williams}
on rewards associated with the choices of $\mathcal{M}$ at every
$elicit$ call, run with the AdamW optimizer and a learning rate of
$10^{-6}$ with a token processing budget $\mathcal{T}$, with a corpus
of 20k `stories'; the evaluation is performed on a corpus of 600 `stories'.
Our experiments demonstrate a clearly non-trivial transfer value across
games. Our results are obtained from the code available in the \href{https://github.com/xentlabs/xega/}{xega repository}
on GitHub. 

\begin{figure}
\includegraphics[scale=0.25]{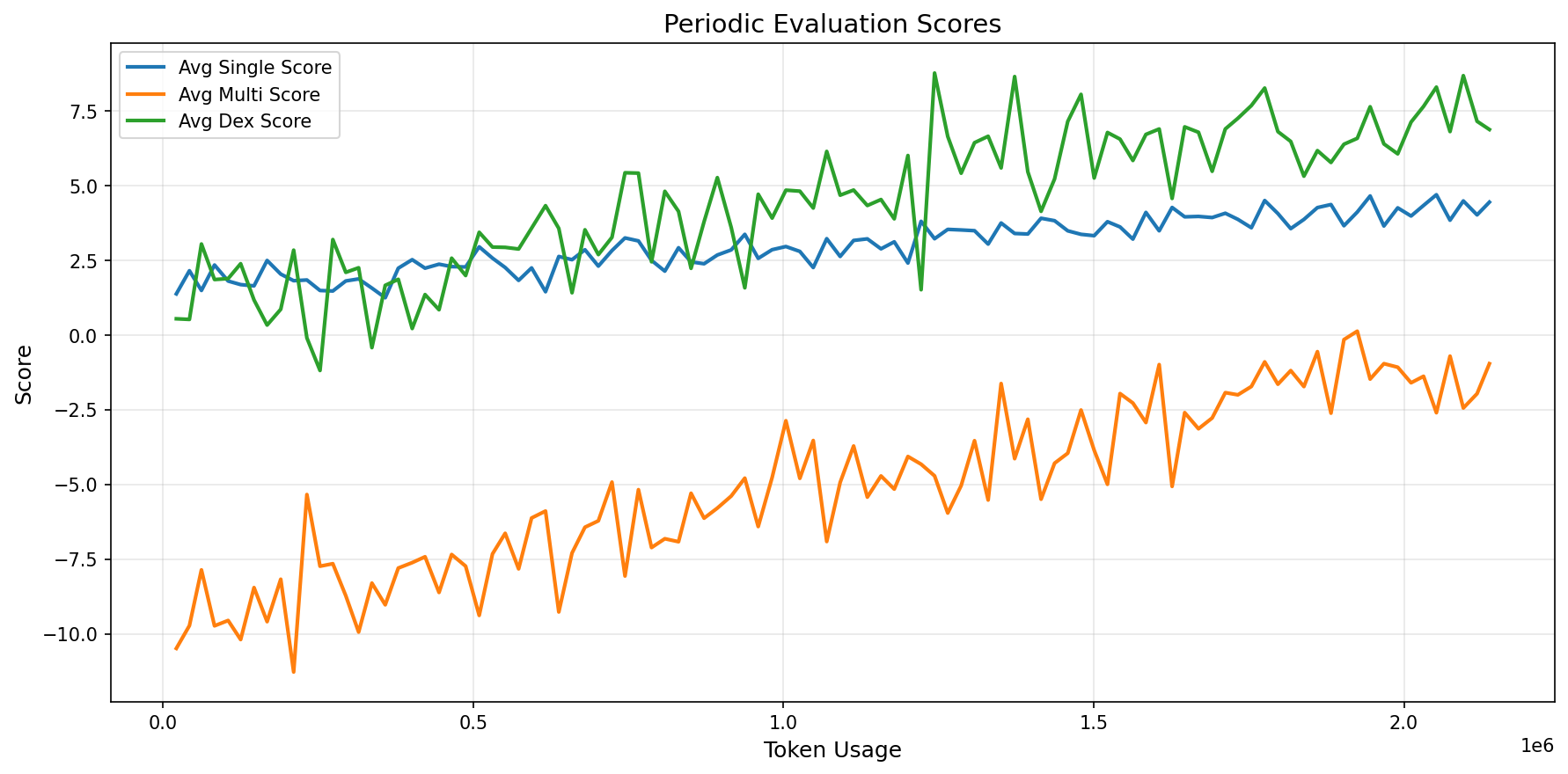}

\includegraphics[scale=0.25]{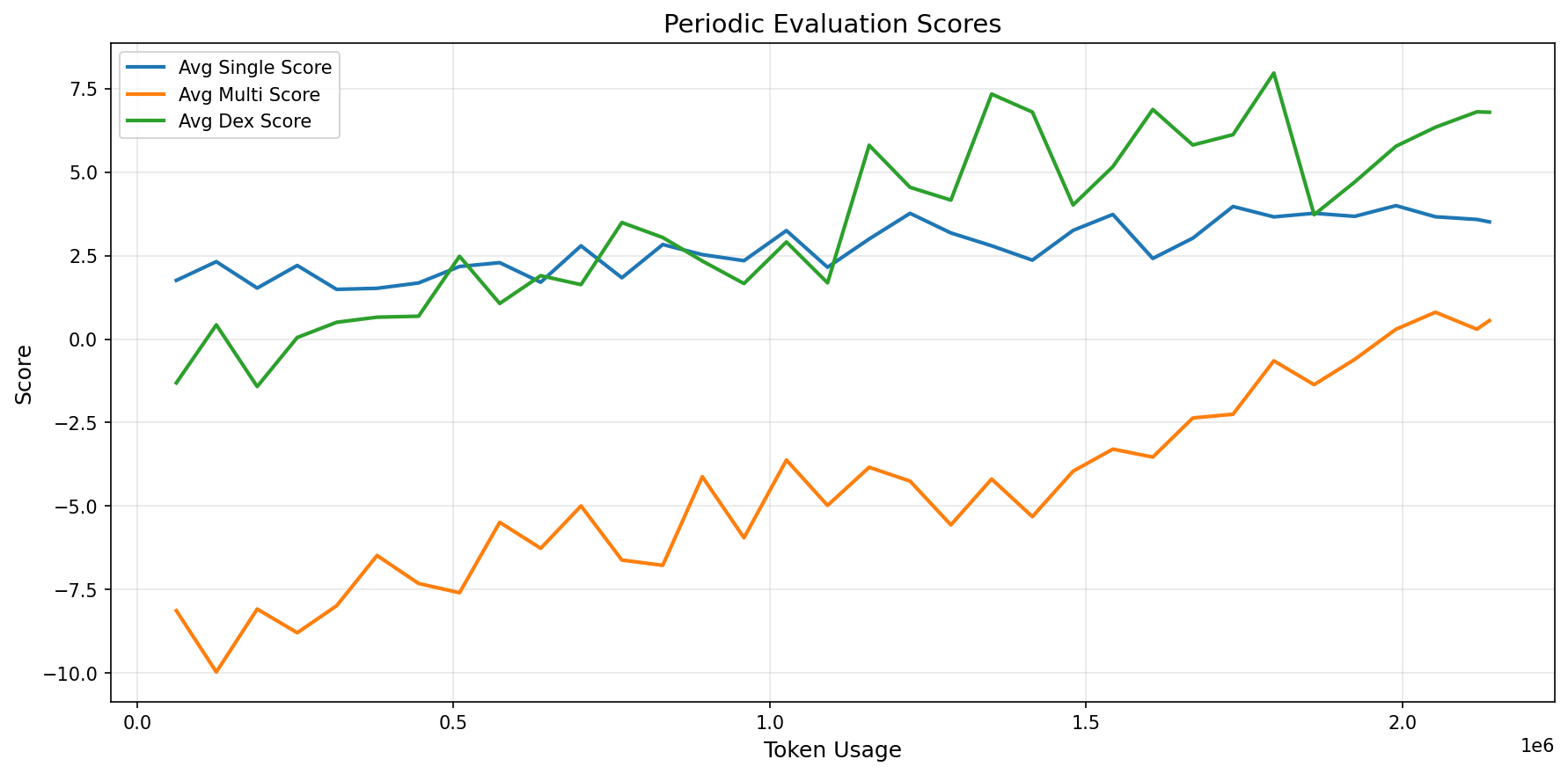}

\includegraphics[scale=0.25]{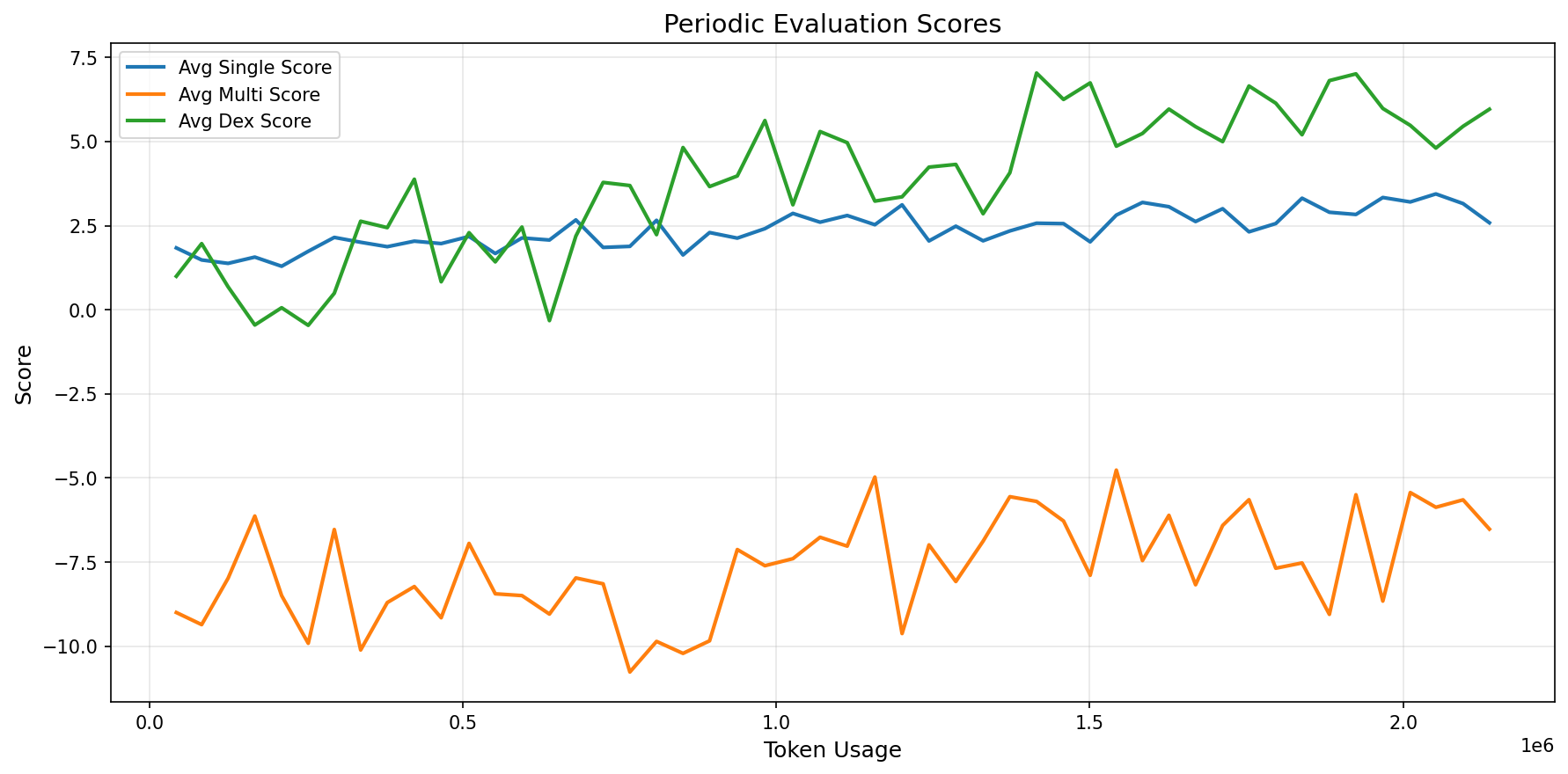}

\caption{\label{fig:evals-transfer-learning}Evaluation results during reinforcement
learning for \textquotedblleft Single Text\textquotedblright , \textquotedblleft Multi
Texts\textquotedblright , and \textquotedblleft Dex Texts\textquotedblright{}
games, respectively.}

\end{figure}
These results demonstrate that even in very simple configurations
there is clear transfer learning between games occurring and that
the transfer rates between games are varied and not symmetrical. We
see in particular that for the ``Dex Texts'' game, we have an asymmetry
between transfer values (see Remark \ref{rem:transfer-value}): for
instance, learning ``Single Text'' helps more for playing ``Dex
Texts'' than learning ``Dex Texts'' helps for playing ``Single
Text''. 

\section{\label{sec:xent-game-measures}Xent Game Measures: Scopes and Covers}

In this section, we construct \emph{Xent Measures}, which yield theoretically-motivated
LLM benchmarks from \emph{Xent Game scopes}, which are families of
playable Xent Games representing a desirable set of abilities. This
will pave the way towards Section \ref{sec:evolution-in-game-space},
which defines \emph{general capability scopes}, leading to general
ability measures.

\subsection{\label{subsec:benchmark-goals}Benchmark Goals}

In this subsection, we first review some works on LLM benchmarking,
in particular towards measuring general abilities. 

\subsubsection{From specific to general abilities: benchmarking}

Traditionally, the benchmarking of LLMs tests their explicit knowledge
using a set of questions (that are typically made by humans) with
e.g. multiple-choice answers. Designing such benchmarks is of course
very contingent on specific areas; there is also naturally the risk
of data contamination. See e.g. \cite{holistic,useful-llm-eval,survey-llm-eval}
for surveys on relevant work. 

Interesting examples of benchmarks that attempt to go towards more
general abilities include for instance \cite{chollet,chollet-knoop-kamradt-landers-pinkard}.
A very promising line of work that generally attempts at evaluating
LLM performance is via the creation of environments to evaluate them
in, see e.g. \cite{agent-gym,stojanovski-stanley-sharratt-jones-adefioye-kaddour-koepf,textworld},
though a principled way to extract a benchmark from such environment
is not yet available. The idea to use games has been popularized in
e.g. \cite{chatbot-arena}. 

In spite of there being a plethora of mechanisms to benchmark LLMs,
the interpretations and theoretical underpinnings of such benchmarks
are arguably rarely clear. In fact, one could even say that it is
not very clear what one expects from a benchmark, besides answering
the informal (and very underspecified) question of which LLMs are
`better and more capable'. Since these notions are not very well defined
a priori, it is useful to consider the question `what is expected
from a \emph{new} benchmark'. A key element one expects of a new benchmark
is for it to inform us along some dimension that is not captured accurately
by other benchmarks, but that one will recognize (perhaps a posteriori)
as being relevant to useful skills. 

While the practical successes yielded in part by the current state
of the art are undeniable (given that, ultimately, most are able to
recognize spectacular abilities of LLMs, without usually being able
to define them), this state of affairs makes it naturally hard to
ascribe a specific meaning to a benchmark score or to clearly identify
needs for future benchmark scores. 

\subsubsection{Vision and Goals}

The vision that we provide is in fact very simple: we would like benchmark
results to have simple interpretations in terms of concrete abilities
on (hopefully quite general) families of tasks, i.e. to provide us
with some statistics telling us `which fraction of tasks within a
certain scope can be accomplished at what skill level'. Such measures
can thus be used to compare capabilities of models with one another,
and also steer the development of LLM abilities to fill some gaps
in capabilities. In some sense, if we can find a fair sample of tasks
that are roughly equivalent in terms of relevance, the output benchmark
should be a histogram of the scores over the various tasks: should
the need to further summarize the model's abilities in one or several
scalars arise, means, medians, and more general moments and quantiles
can give a good idea of the situation. 
\begin{rem}
The idea to use text-based games as means to evaluate LLM abilities
has been considered in \cite{tan-kazemi-mihalcea}. 
\end{rem}

\subsection{\label{subsec:score-normalization}Score Normalization}

As discussed above, the goal of benchmarking is naturally to compare
various models (whether of different architectures, datasets, or simply
fine-tuning of a given model). Naturally, each playable instance of
a Xent Game $G$ can be used to measure various models; given the
diversity of such games, the question of score normalization, in particular
across games, roles, and seeds, is not obvious. 

\subsubsection{\label{subsec:base-model-and-few-shot-mode}Base Model and Few-Shot
Mode}

Related to the question of score normalization is that of units: what
are the units in which we aim to understand the performance of models
playing Xent Games? While Xent Games have the natural advantage of
having scores expressed in bits, surely the bits are not each of the
same value; even within the family of games that are playable for
some model, some are much harder to play than others. 

For instance, going from a 0.95 win rate playing chess against an
agent to a 0.999 win rate could be interpreted as a very drastic increase
in ability (much more than going from a rate of 0.5 to 0.55, say),
while it would correspond a small percent of score increase. 

In order to normalize scores, a simple solution is to identify a sufficiently
strong base model $\mathcal{M}_{\mathcal{B}}$, find reasonable values
of $N$ (number of samples),  $\epsilon>0$ (tolerated `failure probability'),
$\theta$ (increment value), and $m\geqslant2$ (number of increments),
and to consider the set of\emph{ $\left(N,\epsilon,\theta,m\right)$-playable
games} relative to $\mathcal{M}_{\mathcal{B}}$ (see Definition \ref{subsec:few-shot-definition-playability}),
which we will simply call (unless specified otherwise) the \emph{playable
Xent Games}. 

From there, the performance of any model $\mathcal{M}$ can be measured
in terms of \emph{how fast} $\mathcal{M}$ (in terms of iterations)
can reach the level of $\mathcal{M}_{\mathcal{B}}$, and hence yields
results that can be measured in terms of \emph{work} done per iteration
compared to $\mathcal{M}_{\mathcal{B}}$.

This measure of a model's abilities on a game in economic terms is
particularly useful to capture its performance on a task. If we have
a good estimate of a model's score, we will hence have a good estimate
on its \emph{progression curve} when playing a new game, and not only
on its eventual performance. 

\subsubsection{\label{subsec:normalized-score-definition}Normalized Score Definition}

As suggested in Section \ref{subsec:base-model-and-few-shot-mode},
we leverage the concept of playability to obtain a unified normalization
scheme for Xent Games, once a base model $\mathcal{M}_{\mathcal{B}}$
is picked. 
\begin{defn}
Let $G$ be \emph{$\left(N,\epsilon,\theta,m\right)$}-playable by
a base model $\mathcal{M}_{\mathcal{B}}$ for some fixed values of
\emph{$\left(N,\epsilon,\theta,m\right)$} (see Section \ref{subsec:few-shot-definition-playability}
above), with base mean score $b$ at the first iteration times $k_{j}$
when $j$ exceeds $b+j\alpha$ for $j=1,\ldots,m$. Then for any $\ell\leqslant k_{m}$
and any model $\mathcal{M}$ yielding $arms_{N,k}$ scores (see Section
\ref{subsec:few-shot-definition-playability} above) at the various
iterations, the normalized score of $\mathcal{M}$, denoted $narms_{N}$,
at time $\tau=\sup\left\{ k\in\left\{ 1,\ldots,\ell\right\} :arms_{N,k}\leq b+m\alpha\right\} $
is defined by:
\end{defn}

\begin{itemize}
\item If $\tau=\ell$ (i.e. $\mathcal{M}$ has not outpaced $\mathcal{M}_{\mathcal{B}}$
in $\ell$ steps) and $arms_{N,\tau}=b+j\alpha$ for some $j\leq m$,
then $narms_{N}$ is exactly $j$; anything in between is the unique
affine interpolation that is continuous in $arms_{N,\tau}$. 
\item If $1<\tau<\ell$ (i.e. $\mathcal{M}$ has outpaced $\mathcal{M}_{\mathcal{B}}$
in $\ell$ steps) and $arms_{N,\tau}$ is exactly $b+m\alpha$, the
$arms_{N}$ score is $m\frac{\ell}{\tau}$; anything in between is
given by the unique affine interpolation that is continuous in $arms_{N,\tau}$. 
\end{itemize}
Informally, the normalized score consists of the number of play steps
of $\mathcal{M}_{\mathcal{B}}$ that $\mathcal{M}$ is able to catch
up with per step. 

\subsubsection{\label{subsec:multi-game-benchmark-design}Multi-Game Benchmark Design}

In Section \ref{subsec:score-normalization}, we presented the question
that a unified normalization aims to answer: What is the performance
of a model playing a game, relative to a base model?

Following this route, we can now discuss the question of a multi-game
benchmark normalization. Again, a fairly simple question can be asked:
if a model is supposed to play a large number of games, if we run
it for a certain number of steps on each game, what does it deliver
on average, per game, relative to a base model $\mathcal{M}_{\mathcal{B}}$?
The idea of using an averaging measure again follows from economic
considerations: running costs are naturally additive. In addition
to the above motivation, averages (or perhaps more generally weighted
averages) have the advantage of being \emph{maximally sensitive} to
individual game benchmarks (if we use weighted averages, we can explicitly
say how important each game is). 

The problem of \emph{selection} of the games appears, however, to
be substantially more subjective and challenging: we discuss this
question in Section \ref{sec:xent-game-measures}, and an answer based
on an evolution-inspired meta-game will be provided in Section \ref{sec:evolution-in-game-space}. 

\subsection{\label{subsec:benchmarks-from-xent-measures}Benchmarks from Xent
Measures}

As a way to propose a solution to the benchmarking problem raised
in Section \ref{subsec:benchmark-goals} above, it is tempting to
use the structure of the Xent Game space \ref{subsec:characterization-result}
and suggest reliance on benchmarks of the following forms for a suitable
(finite) family of games $\Gamma$:
\begin{defn}
For a family of games $\Gamma$ and a normalized score function $S^{*}$
(such as the $narms$ defined in Section \ref{subsec:base-model-and-few-shot-mode}),
the Xent (probability) measure $\mu_{\Gamma}\left(\mathcal{M}\right)$,
is defined as $\frac{1}{\left|\Gamma\right|}\sum_{G\in\Gamma}\delta_{S^{*}\left(\mathcal{M}\right)},$
where $\delta_{x}$ denotes the Dirac mass at $x\in\mathbb{R}$. 
\end{defn}

\begin{rem}
The above should simply be understood as the histogram of the set
of values $S^{*}\left(\mathcal{M}\right)$ over $G\in\Gamma$ (accounting
for repeated values). 
\end{rem}

In order for $\mu_{\Gamma}$ to serve as the basis of a `good' benchmark
for a given scope, the games in $\Gamma$ must hence be chosen in
such a way that:
\begin{itemize}
\item The family is large enough to satisfactorily cover the diversity of
the scope. 
\item The covering is `fair', i.e. there is no `over-representation' of
some games in the scope. 
\end{itemize}
In Section \ref{subsec:evolution-based-scope-expansion} below, we
will see that the notion of transfer value defined above (Section
\ref{subsec:transfer-value}) can be used to achieve such goals. 

\subsection{\label{subsec:xent-measure-from-scope}Xent Measure from Scope}

The idea behind the definition of Xent Measures from scopes is simple
and very related to meta-learning (see e.g. \cite{baxter-inductive-learning}
for foundations, \cite{meta-learning-llm} for a recent survey about
LLMs, and \cite{tutorial-on-meta-rl} for a survey about meta-reinforcement
learning): given a \emph{scope} $\Sigma$, find a minimal covering
family $\Gamma$ such that any game in $G\in\Sigma$ is `close enough'
to a game in $G'\in\Gamma$ (in the sense that $\mathcal{V}_{G'}\left(G\right)$
is large enough). In other words, we want a minimal set of games $\Gamma$
such that if we had to play any game $G\in\Sigma$, and if we are
able to play the games in $\Gamma$, we could quickly fine-tune our
model to play $G$. 

A minimal covering family $\Gamma$ naturally yields a capability
benchmark for LLMs on the scope $\Sigma$: for a model $\mathcal{M}$,
consider the histogram of the scores $S_{G}\left(\mathcal{M}\right)$
for $G\in\Gamma$. Intuitively, the idea is that if $\Sigma$ is dense,
each $G\in\Gamma$ plays a similar role, and can thus be considered
interchangeably, with the minimality guaranteeing `no over-representation'
of certain regions of the game space: the histogram of the scores
then gives us a good idea of how well $\mathcal{M}$ can be expected
to perform on a random game sampled `near $\Sigma$' (i.e. in a region
well covered by $\Sigma$). 

Despite the fact that the following definition is quite theoretical
(even for very simple cases, computing the optimal cover is NP-hard),
it is reasonable to aim for approximations (and find good covers).
\begin{defn}
Consider a scope $\Sigma$, and a space of games $\Upsilon$.
\begin{itemize}
\item For a family $\Gamma\subset\Upsilon$, we denote by $\Delta_{\Gamma}\left(\Sigma\right)$
the \emph{transfer cover} of $\Sigma$, defined as $\min_{G\in\Sigma}\max_{G'\in\Gamma}\mathcal{V}_{G'}\left(G\right)$:
this represents how much $\Sigma$ is guaranteed to be covered by
$\Gamma$.
\item We call a family of Xent Games $\Gamma$ an $\eta$-cover if $\Delta_{\Gamma}\left(\Sigma\right)\geq\eta$.
\item We say that an $\eta$-cover of $\Sigma$ is optimal if its cardinality
is minimal and if its transfer cover is maximal for its cardinality,
and we denote such a cover $\Gamma_{\Sigma}^{\eta}$.
\end{itemize}
\end{defn}

\begin{rem}
We have that $\Gamma_{\Sigma}^{\eta}$ is, in some sense, a `skeleton'
of $\Sigma$, i.e. a version of $\Sigma$ stripped of duplicate games,
and possibly games replaced by `leaner' versions of themselves, for
games $G$ such that there exists a $G'$ with $\mathcal{V}_{G'}\left(G\right)>\mathcal{V}_{G}^ {}\left(G\right)$.
\end{rem}

From the above, we obtain: 
\begin{defn}
For a scope $\Sigma$, and a cover $\eta$ we call the benchmark measure
associated with $\left(\Sigma,\eta\right)$ the Xent Measure $\mu_{\Sigma}^{\eta}:=\mu_{\Gamma_{\Sigma}^{\eta}}$.
\end{defn}

\begin{rem}
Even if the scope $\Sigma$ is large, most of the interesting covers
should correspond to relatively small values of $\eta$: these correspond
to fairly coarse covers. 
\end{rem}

\subsection{\label{subsec:scope-selection}Scope Selection}

From Section \ref{subsec:xent-measure-from-scope}, given a finite
scope $\Sigma$, a Xent benchmark measure $\mu_{\Sigma}^{\eta}$ can
be defined for any $\eta\leq1$. This leaves open the question of
choosing the scope itself. Depending on one's goals, different types
of considerations can apply: 
\begin{enumerate}
\item \label{enu:specific-goal}We have a set of specific goals that are
covered by a set of games, such as the ones introduced in Section
\ref{sec:xent-game-examples}.
\item \label{enu:discriminative-goal}We want to highlight differences in
capabilities between various models: we would like to select games
that are maximally discriminative between models. 
\item \label{enu:general-goal}We are aiming to measure general abilities,
in particular how a model will perform in a setting that is not known
a priori.
\end{enumerate}
These considerations can apply separately or simultaneously: for instance,
we may have a set of pre-defined goals, but would like to expand in
generality around those, or may want to set specific goals to allow
a model to differentiate itself from other models. 

Since the key problem raised by the Items \ref{enu:specific-goal}
and \ref{enu:discriminative-goal} is to construct a specific list
of games (whether hand-crafted or picked according to a feature selection
process), if we want to complete the above picture with the generality
dimension of Item \ref{enu:general-goal}, the following question
emerges: how to grow, starting from a list of specific games, a sequence
of games that measure more and more general capabilities around that
initial list? This is the content of Problem \ref{prob:scope-growth-question}
below. 

\subsubsection{\label{subsec:towards-a-general-ability-measure}Towards a General
Ability Measure}

A naive approach would be to say that since `general abilities' are
about all the games, we should want to put all games on equal footing;
an obvious problem is the infinite size of the resulting scope, making
it unlikely we can cover the space with a finite (let alone small)
set of games. Intuitively, this makes sense, as it is unlikely that
`general abilities' could ever be well summarized by a fixed number
of games: for any finite game set, it is very plausible we can find
a game that is substantially different from all games in that set. 

Giving up on the idea of a finite set of games appears to be a promising
direction, bringing us closer to something like a curriculum. Naturally,
this forgoes the idea of a uniform benchmark measure (see Remark \ref{rem:infinite-scope-subjectivity}
below), and leads us to focus on an \emph{order} for games that are
to make up a benchmark. 
\begin{problem}
\label{prob:scope-growth-question}Starting from an initial scope
$\Sigma_{0}$, what is a principled way to define a sequence of Xent
Measures $\mu_{k}$ which form a good estimate of a model's general
abilities around $\Sigma_{0}$?
\end{problem}

\begin{rem}
\label{rem:infinite-scope-subjectivity}While ordering measures (and
scopes) introduces a subjective bias, this seems absolutely necessary
for both theoretical and practical reasons: this is similar to the
fact that it is impossible to sample an (unbounded) integer uniformly
at random. 
\end{rem}

\begin{rem}
\label{rem:general-measure-as-limit}In practice, if, for a given
model $\mathcal{M}$, the Xent Measures $\mu_{k}\left(\mathcal{M}\right)$
stabilize around some value $\mu_{*}$, it seems reasonable to ascribe
that value $\mu_{*}$ as a measure of $\mathcal{M}$ as a `general
ability measure of $\mathcal{M}$' (even though the latter may not
be definable generally: this is the same kind of criterion used to
say that `50\% of the integers are even', in spite of the fact that
there is no uniform measure on integers).
\end{rem}

While this question is naturally very under-specified, we will see
in Section \ref{sec:evolution-in-game-space} that ideas inspired
by evolution ideas can lead us towards principled answers to Problem
\ref{prob:scope-growth-question}. 

\section{\label{sec:evolution-in-game-space}Evolution in Game Space}

In Section \ref{sec:xent-game-measures}, we introduced the idea of
benchmarks based on \emph{scopes}, i.e. families of Xent Games. In
Section \ref{subsec:towards-a-general-ability-measure}, we discussed
the challenges associated with infinite scores, and formulated Problem
\ref{prob:scope-growth-question}: how to find, starting from an initial
family of games $\Sigma_{0}$, a sequence of measures that can be
viewed (thinking of the $k\to\infty$ limit) as an approximation of
general abilities (in the sense of Remark \ref{rem:general-measure-as-limit})? 

In this section, we consider ways to explore the space of Xent Games
by leveraging its geometric structure. We then propose, from game-theoretic
considerations and evolution-inspired ideas, mechanisms to define
sequences of measures $\mu_{k}$ that aim at measuring general model
capabilities. 

\subsection{\label{subsec:xent-game-space-geometry}Xent Game Space Geometry}

One of the key insights about the Xent Game space is that the games
are related to one another via their structures: this is essentially
a consequence of the proof of Theorem \ref{thm:characterization-perfect-information-xent-games}
above. This suggests (informally) that if one has a strong enough
base model $\mathcal{M}_{\mathcal{B}}$, one should be able to transform
any (reasonable) Xent Game into any other Xent Game by a sequence
of moves such that we can transfer some non-trivial amount of skill
from game to game (locally, not necessarily across the whole path). 
\begin{defn}
For $\alpha>0$ and a base model $\mathcal{M}_{\mathcal{B}}$, we
write $G_{1}\rightmoon_{\alpha}G_{2}$ if $\mathcal{V}_{G_{1}}\left(G_{2}\right)\geqslant\alpha$
and $G_{1}\leftmoon_{\alpha}G_{2}$ if $\mathcal{V}_{G_{2}}\left(G_{1}\right)\geqslant\alpha$,
and call a finite sequence of games $G_{1},\ldots,G_{n}$ an $\alpha$\emph{-downstream
path} if $G_{1}\rightmoon_{a}\cdots\rightmoon_{\alpha}G_{n}$ and
an $\alpha$\emph{-upstream path} if $G_{1}\leftmoon_{\alpha}\cdots\leftmoon_{\alpha}G_{n}$. 
\end{defn}

\begin{rem}
As suggested above, there is no reason to expect a priori $\rightmoon_{\alpha}$
to be a transitive notion. 
\end{rem}

\begin{defn}
For $\alpha>0$ and a collection of games $\Sigma_{0}$, we denote
by $\left(\Sigma_{0}\right)_{\alpha}$ the \emph{$\alpha$-downstream
component} defined as the set of games $G$ reached by an $\alpha$-downstream
path that starts from a game in $\Sigma_{0}$; similarly, we denote
by $\left(\Sigma_{0}\right)^{\alpha}$ the \emph{$\alpha$-upstream}
\emph{component} defined as the set of games reached by an $\alpha$-upstream
path starting from a game in $\Sigma_{0}$.

\label{def:alpha-connected}We say that a space of Xent Games is $\alpha$-connected
if there is a finite $\Sigma_{0}$ such that the downstream component
$\left(\Sigma_{0}\right)_{\alpha}$ is the whole space. In other words,
the set of Xent Games is $\alpha$-downstream connected to a finite
set of games.

For a `sophisticated enough model', it is reasonable (by the construction
of Theorem \ref{thm:characterization-perfect-information-xent-games}
above) to postulate that all the (playable) Xent Games are related
in the following sense: one can always modify a playable game into
another one by a chain that is $\alpha$-downstream and by a chain
that is $\alpha$-upstream, for some small enough $\alpha$. This
is the content of the following: 
\end{defn}

\begin{assumption}
\label{assu:alpha-connectedness}There exists some $\alpha>0$ such
that the space of playable Xent Games is $\alpha$-connected, i.e.
that we can reach any playable game from a finite set of basic playable
games (e.g. the basic Xent Game) by $\alpha$-downstream paths and
also by $\alpha$-upstream paths. 
\end{assumption}

\begin{rem}
This may suggest to take $\alpha$ small enough to cover the whole
space of games (note that if $\alpha$ is too small, games will only
be marginally related); or we could also instead decide to focus on
an $\alpha$-connected component for some a priori fixed $\Sigma_{0}$.
Generally speaking, the choice of $\alpha$ seems to be a subtle question
which we plan on investigating in further works. 
\end{rem}

As the space of Xent Games is formally infinite, the situation may
seem analogous to the situation where we need to explore an infinite-dimensional
vector space, and it may as a result seem hopeless to `exhaust' it
\emph{a priori }(i.e. without additional structures) with a growing
sequence of `uniform' measures on finite sets (as follows from e.g.
Baire's Theorem). As explained in Section \ref{subsec:web-exploration-intuition}
below, we will essentially give up on this analytic point of view
to focus on heuristics coming from samples from exploration processes
that leverage some weak form of ergodicity: if the measures we find
through different runs are consistent for a given model, we postulate
that we have a `good' measure of the model's ability (and we postulate
that inconsistencies are unlikely, based on universality assumptions,
see Section \ref{subsec:universality-do-details-matter}). 
\begin{rem}
Once a base model $\mathcal{M}_{\mathcal{B}}$ is fixed, the geometric
picture outlined above is most of what we use about playable Xent
Games in this section (i.e. we abstract the specific details of Xent
Game implementation). 
\end{rem}

\subsubsection{\label{subsec:web-exploration-intuition}Web Exploration Intuition}

If we assume that the space of Xent Games is $\alpha$-connected for
some $\alpha>0$, we can think of it as an (infinite) graph and we
can put a downstream link from $G_{1}$ to $G_{2}$ if $G_{1}\rightmoon_{\alpha}G_{2}$
and an upstream link if $G_{1}\leftmoon_{\alpha}G_{2}$. We can hence
leverage this structure to think of the space of games as a kind of
`inter-game web' and use this vision as a guiding principle to explore
it like a (very) complex network. 

Note that unlike the internet case (where listing all the outgoing
links from a page is trivial), it is a priori very non-trivial to
list all the upstream and downstream links from a given game; however,
we can think that in practice, if we have a finite list of games $\Sigma$,
it is definitely possible to determine which ones are $\alpha$-linked
and which ones are not, thus equipping $\Sigma$ with that structure.
\begin{rem}
In practice, the way the exploration around a node $G$ is performed
should be by asking an LLM to perform various `mutations' around $G$,
and finding the $\alpha$-links among those. 
\end{rem}

Following this principle, it is tempting to think of the exploration
of the space of games as being similar to `random web browsing' (following
the philosophy of the PageRank algorithm \cite{page-brin-motwani-winograd}).
If in some sense, one wants to find a `good sample' of the internet,
one can browse randomly and save the visited pages on the fly. Depending
on the application, we may even move for long enough so that our visit
becomes independent from the starting point (note also that the longer
we `walk', the more biased towards the more connected nodes we get).
From this, we could, in principle, use various heuristics (e.g. the
PageRank algorithm) to rank the games that we care about for a target
application. 

In order to implement the above general idea of `exploration', we
still need to clarify what `application' we aim for (at least, implicitly),
in particular we try to estimate general capabilities of models. In
Section \ref{subsec:evolution-based-scope-expansion} below, we propose
a principled way to perform this, based on evolutionary ideas. 

\subsubsection{\label{subsec:toy-model}Toy Model}

As a means to gain some intuition on the geometry of the Xent Game
space, we can use the following model: 
\begin{itemize}
\item Represent by indices $1,2,3,\ldots,n,\ldots$ an (infinite) list of
independent `skills' that may be associated with games.
\item Model each game $G$ by an infinite nonzero vector $\left(g_{1},g_{2},\ldots,g_{n},\ldots\right)$
of nonnegative entries that have (uniformly) bounded sum $\|g\|_{1}$,
with each entry $g_{i}$ representing naively `how much the skill
$i$ is involved (and learnable) when playing $g$'. 
\item For any pair of games $G,\tilde{G}$, corresponding to $\left(g_{1},g_{2},\ldots,g_{n},\ldots\right)$
and $\left(\tilde{g}_{1},\tilde{g}_{2},\ldots,\tilde{g}_{n},\ldots\right)$,
we define $\mathcal{V}_{G}\left(\tilde{G}\right)$ as $\frac{1}{\|\tilde{G}\|_{1}}\sum_{i=1}^{\infty}\min\left(\tilde{g}_{i},g_{i}\right)$,
i.e. which fraction of $\tilde{g}$ skills are `captured' by playing
$g$ and, symmetrically, we define $\mathcal{V}_{\tilde{G}}\left(G\right)$
as $\frac{1}{\|G\|_{1}}\sum_{i=1}^{\infty}\min\left(g_{i},\tilde{g}_{i}\right)$.
\end{itemize}
\begin{rem}
This simply suggests the (naive, but useful) view that games are associated
with skills that are entries in a vector, and that learning these
games amounts to acquiring skills proportionally to the vector of
skills. 
\end{rem}

As we will see in Section \ref{subsec:numerical-simulation-exploration-in-toy-model},
taking random games (by picking a distribution for the skill assignments
for games), we can construct a model space to investigate various
exploration algorithms numerically. 

\subsection{\label{subsec:evolution-based-scope-expansion}Evolution-Based Scope
Expansion }

\subsubsection{\label{subsec:meta-game-informal-dynamics-idea}Meta-Game: Informal
Evolution Dynamics Idea}

In a sense, the question of measuring `general capabilities' of models
comes down to a motivation: why do we want to evaluate general abilities
in the first place? A plausible answer is because agents will be exposed
to unforeseen problems; but then it is important to understand what
could possibly generate these unforeseen problems. If we think of
challenges that intelligent (animal) agents face in the real world
(especially unforeseen ones), they very often come down to the fact
that other agents are present (animals are in competition for resources,
can be preys or predators, must reproduce, etc.). 

These kinds of questions and ideas have been explored in the open-ended
evolution context, in particular under the context of \emph{Quality-Diversity}
search (see e.g. \cite{lehmann-stanley-evolution-novelty,lehmann-stanley-novelty-search,clune,etcheverry-chan-moulin-frier-oudeyer}).

Based on such ideas, we propose to explore the space of Xent Games
using a competitive vision which is close in spirit to the Novelty
Search with Local Competition (NSLC) algorithms proposed in \cite{lehmann-stanley-evolution-novelty,lehmann-stanley-novelty-search}.
We can imagine a world where a number $N$ agents play a (repeated)
meta-game where \emph{each move is the generation of Xent Games}.
The meta-game design elements consist of roughly the following:
\begin{itemize}
\item Agents can choose some games they want to specialize in (i.e. that
they decide to learn playing `privately'), which they will play with
a random sample $k$ of other agents (this would naively correspond
to the `field' on which they choose to live) in zero-sum mode.
\item Agents must at the same time play against the $k$ randomly sampled
agents (again, in zero-sum mode) with the games picked by the latter
(on which they will have specialized, by symmetry).
\end{itemize}
If we think of the competitive landscape, agents will hence be incentivized
to pick specialization in games that give them an edge over other
agents' games (because of the zero-sum nature of the games: these
are resource allocation games), while at the same time picking games
at which other agents' skills are under-developed. This naturally
suggests a competitive dynamic will take place, which should see the
games evolve over time. 
\begin{rem}
This view of an evolving landscape generated by agents in a local
game (or meta-game here) is for instance studied in \cite{laland-et-al}.
The idea of using evolution-related ideas for artificial general intelligence
has seen some uptick recently, see in particular \cite{aa-team,hughes-dennis-parker-holder-bekhabani-mavalankar-shi-schaul-rocktaechel,zhang-hu-lu-lange-clune}.
Note that these ideas are also similar to the ideas of artificial
curiosity and self-improvement \cite{schmidhuber,schmidhuber-survey,clune}. 
\end{rem}

Leveraging such a dynamic as a `browsing mechanism' (to use the analogy
of Section \ref{subsec:web-exploration-intuition}) to build a \emph{game
archive} (the set of games visited) used to measure general abilities
is appealing (as far as building a scope is concerned), as it provides
a general idea of where `unforeseen new games' could plausibly come
from; we could therefore use the list of visited games as a basis
for a benchmark. 
\begin{rem}
While it can definitely be interesting to specify the details of the
above meta-game, in the current paper we only use it as a means to
inform the selection of relevant games to build appropriate Xent Measures.
As such, it reasonable (assuming universality, see Section \ref{subsec:universality-do-details-matter}
below) to only assume that agents all use the same fine-tuned variants
of the base model $\mathcal{M}_{\mathcal{B}}$. 
\end{rem}

\subsubsection{\label{subsec:key-features-of-the-meta-game}Key Features and Key
Challenges of the Meta-Game}

The vision outlined in Section \ref{subsec:meta-game-informal-dynamics-idea}
above promises a number of interesting features:
\begin{itemize}
\item If we start from a set of `core abilities' represented by an initial
game $\Sigma_{0}$, we have a good idea of `how far we drifted' from
$\Sigma_{0}$ as the process runs, and we can in principle use this
to adjust the weights of the new games accordingly.
\item There is fairly clear idea of what we mean by `general capabilities'
in this context: the ability to make up for a lack of a specialization,
that could realistically be picked and indeed learned by another agent
(i.e. for a `useful' reason). In other words, agents that are `generally
more capable' will withstand confrontation with other agents on their
respective fields, while keeping an edge on their own. 
\item It is also consistent with our view on game normalization, where scores
represent the ability to make up for a number of attempts: in a sense,
we measure the ability to make up for a missed `virtual number of
attempts' that could have been emulated by an alternative specialization. 
\item If $N$ is not too large, the dynamics will naturally avoid oversampling:
agents are incentivized to find `original' directions that are far
away from existing games, otherwise they will get `eaten' by competitors;
if there is an `oversampled region', it naturally becomes prey for
`original' agents.
\item If $N$ is not too small, the agents will at the same time be incentivized
to stay `generally relevant', i.e. not to `drift towards isolated
niches' (that would deprive them from an ability to compete with the
others). 
\item This suggests a view about general abilities formed by the aggregate
of metastable equilibria: driven by the individual `initiatives' that
are selected on the basis of being relevant between covering existing
abilities and uncovering new directions. Hence, from the meta-game
dynamics, the only for agents to not extend to broader (i.e. `uncaptured')
regions of the game space is if there are no paths of `meaningful'
games to them. 
\end{itemize}
In spite of the appealing features listed above, a number of complex
features associated with the meta-game described in Section \ref{subsec:meta-game-informal-dynamics-idea}
make it very under-specified and challenging to implement in practice:
\begin{itemize}
\item Many unspecified details about the game are, in fact, difficult to
set: how many players are involved throughout the steps, how are they
different from one another, how much can they capitalize upon their
previous choices, how are they making their choices, what information
do they have about each other, etc? 
\item The dynamics suggest an `intersubjective' formulation of general capabilities:
general capabilities move in some sense towards a direction resulting
from what agents expect general abilities to look like. It can lead
to a very unstable set of games that `each believes the other believes,
but no one truly believes it'.
\item It is not clear how things will behave if the number of players becomes
large, and whether determining an optimal strategy is even feasible;
if the game is a partial information game, the corresponding Nash
equilibria are likely to be impossible to approximate in practice. 
\item While this is not necessarily a drawback, the evolution should involve
randomness: in natural settings, it is pretty clear that the unpredictable
component of evolution is an integral part of it and that no two `runs'
of a system will yield the same results. 
\item It is not clear exactly how players are supposed to play in practice:
should they communicate to cooperate, do they have bounded rationality,
should they assume the other players are being played by some fixed
models, etc.?
\end{itemize}
It should be obvious that determining a precise, definitive answer
to these questions, if one tries to model real-world-inspired dynamics,
is impossible. Ultimately, a way out is possible if the result of
the dynamics, as far as measuring general capabilities is concerned,
does not depend on these details: this is the view proposed in Section
\ref{subsec:universality-do-details-matter}. 

\subsubsection{\label{subsec:universality-do-details-matter}Universality: Do Exploration
Details Matter?}

A way out of the difficulties associated with playing the meta-game
discussed above is to ask the question: does the precise meta-game
play algorithm really matter? More precisely: if all we care about
is selecting lists of Xent Games to benchmark the general abilities
of agents, does the specific choice of games picked by the agents
in the meta-game actually matter?

Heuristically, there is a very simple reason why specific details
should not matter: the whole point of any good measure of general
abilities is that it should \emph{actually be fairly resilient to
changes of the rules of the games we use}. Agents with strong general
capabilities will keep their strong performances on games that are
known to be connected, while over-specialized agents are unlikely
to get systematically lucky over a large, diverse enough family of
games. Going further, any strong agent playing the meta-game should
be playing in an environment where it can withstand any \emph{specific,
computable} choice of games: any such choice of games is in itself
a possible play strategy, and any meta-game agent should choose games
so that it can be resilient against such a specific strategy. Similarly,
as suggested in Section \ref{subsec:didactic-value-of-a-game}, we
expect the dependence of the measures of general abilities to be fairly
independent of the precise setting (the $\mathcal{M},\mathcal{F},\mathcal{T}$
setting) associated with the transfer value functions (as long as
they are `good'). 

It is, in fact, reasonable to postulate \emph{the only relevant features
are that the exploration covers a (1) connected and (2) diverse enough
web of games}, \emph{sampled in a (3) fair way}. While the connectedness
is actually easy to guarantee (by construction), it is important to
make sure that the scope does not miss crucial games, i.e. games that
are important to learn other games. As we will see in Section \ref{subsec:scope-growth}
below, a criterion that essentially implies this is the following:
\begin{assumption}
\label{assu:boundedness}There exists a number $\alpha>0$ and a number
$K\geq1$ such that the space of Xent Games is $\alpha$-connected
and such that for any game $G$, the maximal number $n$ of `escaping
directions', i.e. of games $G_{1},\ldots,G_{n}$ such that 
\begin{align*}
\min_{k=1,\ldots,n}\mathcal{V}_{G_{k}}\left(G\right) & \geqslant\alpha\\
\max_{j=1,\ldots,k-1}\mathcal{V}_{G_{j}}\left(G_{k}\right) & \leqslant1-\alpha & \forall k\geqslant2
\end{align*}
 must satisfy $n\leqslant K$.
\end{assumption}

\begin{rem}
The above assumption suggests it is impossible to find very long sequences
of `independent' games that all capture some new meaningful information
about any game: in other words, the number of different skills that
have a substantial contribution to a game is finite. This natural
assumption is, for instance, easily verified in the toy model introduced
in Section \ref{subsec:toy-model} for $\alpha$ small enough and
$K=\mathcal{O}\left(\left(1-\alpha\right)/\alpha\right)$.
\end{rem}

From the Assumption \ref{assu:boundedness}, if there is an `undiscovered'
important game, there is at least one chain leading to it and we will
eventually find it, as it is impossible to `get lost' in too many
games starting from a game. Note that in practice, it is important
to ensure that $K$ is reasonably small. From the Section \ref{subsec:scope-growth},
we propose a simple greedy algorithm aimed at extracting a sequence
of games which arguably captures the same abilities as those discovered
by dynamics like the one suggested in Section \ref{subsec:meta-game-informal-dynamics-idea}.
\begin{rem}
A very short summary of the above discussion is simply: \emph{any
computable algorithm is a strategy}, so any generally capable agent
must be ready to face it, and conversely, any strategy must uncover
a family of connected games, so any fine-grained enough will uncover
them. 
\end{rem}

\subsection{\label{subsec:scope-growth}Scope Growth Algorithm}

\subsubsection{\label{subsec:simple-minded-point-of-view}Simple-Minded Point of
View}

The success of the ideas of the Novelty Search with Local Competition
algorithms together with universality considerations postulated above
(Section \ref{subsec:universality-do-details-matter}) suggest that
\emph{naive algorithms} for evolutionary processes like the ones discussed
in Section \ref{subsec:meta-game-informal-dynamics-idea} are a promising
way to search for an extended Xent Game scope. 

\subsubsection{\label{subsec:algorithm-specification}Algorithm Specification}

From the considerations made in Section \ref{subsec:evolution-based-scope-expansion}
above, we propose the greedy scope growth algorithm, which builds
an archive of games (in the terminology of the Quality-Diversity algorithms)
as a means to build Xent Measures. 

\begin{algorithm}
\begin{itemize}
\item Parameters: $\alpha_{1}>0,\alpha_{2}>0$, a meta-sampling process
$\Lambda$ (see Section \ref{subsec:meta-sampling-process}).
\item Start with an initial scope $\Sigma_{0}$. 
\item For $k=0,1,2,\ldots,N$
\begin{itemize}
\item Set $\Omega_{k}=\emptyset$.
\item For each $G$ in a randomized ordering of $\Sigma_{k}\setminus\Sigma_{k-1}$:
\begin{itemize}
\item Repeat until stop:
\begin{itemize}
\item Sample from the meta-sampling process $\Lambda$ a list of games $\mathcal{G}_{j}$
related to $G$.
\item Find the $G_{*}\in\mathcal{G}_{j}$ such that $G_{*}\rightmoon_{\alpha_{1}}G$
that achieves the smallest value of $\gamma:=\max_{\tilde{G}\in\Sigma_{k}}\mathcal{V}_{\tilde{G}}\left(G_{*}\right)$
possible.
\item If $\gamma\leq1-\alpha_{2}$: add $G_{*}$ to $\Omega_{k}$.
\item Else: stop.
\end{itemize}
\end{itemize}
\item Set $\Sigma_{k+1}=\Sigma_{k}\cup\Omega_{k}$ if $\Omega_{k}\neq\emptyset$.
\item Output the Xent Measure $\Sigma_{k+1}$.
\item Stop if $\Omega_{k}=\emptyset$. 
\end{itemize}
\end{itemize}
\caption{\label{alg:scope-growth-algorithm}Scope Growth Algorithm}
\end{algorithm}

\begin{rem}
By the Assumption \ref{assu:boundedness}, each `repeat' round is
guaranteed to stop after $K$ steps. In practice, the fact that the
list of games $\mathcal{G}_{j}$ must be sampled from an LLM will
make this stop much faster. 
\end{rem}

From our picture of the Xent Game space, we obtain the following:
\begin{claim}
From Assumptions \ref{assu:alpha-connectedness} and \ref{assu:boundedness},
for small enough $\alpha>0$, the Scope Growth Algorithm (\ref{alg:scope-growth-algorithm})
with $\alpha_{1},\alpha_{2}\leq\alpha$ covers the space of playable
Xent Games as $N\to\infty$.
\end{claim}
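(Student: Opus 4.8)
The plan is to split the statement into two independent guarantees: that the algorithm is \emph{well-defined} (each inner \emph{repeat} loop, and hence each outer stage, halts after finitely many additions), and that it is \emph{exhaustive} (every playable game eventually lies in, or is covered to within $\alpha_{2}$ by, the growing archive $\mathcal{A}=\bigcup_{k}\Sigma_{k}$). The first will come from the boundedness of escaping directions (Assumption \ref{assu:boundedness}), the second from the $\alpha$-connectedness of Assumption \ref{assu:alpha-connectedness}, with $\alpha_{1}\leq\alpha$ playing exactly the role of letting an $\alpha$-level connection pass the algorithm's $\alpha_{1}$-level admission test.

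For halting, I would fix a frontier game $G$ and let $G_{*}^{(1)},G_{*}^{(2)},\ldots$ be the games the inner loop appends while processing $G$. Each satisfies the admission test $G_{*}^{(i)}\rightmoon_{\alpha_{1}}G$, i.e. $\mathcal{V}_{G_{*}^{(i)}}(G)\geq\alpha_{1}$, and each is novel against everything archived so far, i.e. $\mathcal{V}_{\tilde{G}}(G_{*}^{(i)})\leq 1-\alpha_{2}$ for all previously archived $\tilde{G}$ (including $G_{*}^{(1)},\ldots,G_{*}^{(i-1)}$, once the novelty test $\gamma$ is read against the current archive rather than against the frozen $\Sigma_{k}$). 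This is precisely the defining configuration of an escaping-directions family at $G$, so Assumption \ref{assu:boundedness} caps its length by $K$ at the operative level. Hence every inner loop stops after at most $K$ additions, each stage is finite, and the algorithm is well-defined; this is exactly the content of the remark following the algorithm.

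For exhaustiveness, take any playable game $G$. By Assumption \ref{assu:alpha-connectedness} there is an $\alpha$-upstream path $H_{0}\leftmoon_{\alpha}H_{1}\leftmoon_{\alpha}\cdots\leftmoon_{\alpha}H_{m}=G$ with $H_{0}\in\Sigma_{0}$; by definition $H_{i+1}\rightmoon_{\alpha}H_{i}$ for every $i$, and since $\alpha_{1}\leq\alpha$ each step passes the admission test $H_{i+1}\rightmoon_{\alpha_{1}}H_{i}$. I would then argue by induction on $i$ that $H_{i}$ becomes covered once it appears on the frontier: when $H_{i}$ is processed, the meta-sampling process $\Lambda$ (assumed fair, in the sense that it eventually proposes every $\alpha_{1}$-upstream neighbour of a frontier game) proposes $H_{i+1}$, and the loop either admits $H_{i+1}$ or rejects it because some archived $\tilde{G}$ already satisfies $\mathcal{V}_{\tilde{G}}(H_{i+1})>1-\alpha_{2}$. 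Either way $H_{i+1}$ is covered by $\mathcal{A}$, and since admitted games cover themselves at value $1$, letting $N\to\infty$ propagates the induction to $H_{m}=G$, so $\mathcal{A}$ is a $(1-\alpha_{2})$-cover of the whole space.

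The main obstacle is the propagation step in the presence of the novelty filter, sharpened by the fact (Remark \ref{rem:transfer-value}) that $\mathcal{V}$ is neither symmetric nor transitive. When $H_{i+1}$ is rejected as already-covered, it is some $\tilde{G}\in\mathcal{A}$, not $H_{i+1}$ itself, that sits on the frontier, and one must still be able to continue the path from $\tilde{G}$ toward $H_{i+2}$. The two facts $\mathcal{V}_{\tilde{G}}(H_{i+1})>1-\alpha_{2}$ and $\mathcal{V}_{H_{i+1}}(H_{i+2})\geq\alpha$ do not, without a quantitative near-transitivity estimate, yield $\mathcal{V}_{\tilde{G}}(H_{i+2})\geq\alpha_{1}$; closing this gap is exactly where the smallness of $\alpha$ in the statement must be spent, and where one either invokes a near-transitivity of $\mathcal{V}$ along $\alpha$-connected paths or strengthens the hypothesis so that a covering game inherits the onward connectivity of the game it replaces. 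A secondary difficulty is to formalize the fairness/exhaustiveness property of $\Lambda$ so that, in the $N\to\infty$ limit, every required upstream neighbour is proposed with probability one.
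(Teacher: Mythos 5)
The paper states this claim without any proof: it is a remark-style assertion, and the only argument the paper supplies in its vicinity is the observation (in the remark preceding the claim) that Assumption \ref{assu:boundedness} caps each \emph{repeat} round at $K$ additions. Your first half --- termination --- reproduces exactly that observation and is essentially correct: the games appended while processing a frontier node $G$ each satisfy $\mathcal{V}_{G_{*}^{(i)}}(G)\geq\alpha_{1}$ and are pairwise non-covering at level $1-\alpha_{2}$, which is the escaping-directions configuration of Assumption \ref{assu:boundedness}, so the inner loop halts after at most $K$ steps (modulo the small mismatch you note between testing $\gamma$ against the frozen $\Sigma_{k}$ versus the running archive, which is worth pinning down but does not change the bound). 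Your second half correctly identifies the only route the stated assumptions offer toward exhaustiveness, namely following the $\alpha$-upstream paths of Assumption \ref{assu:alpha-connectedness} backwards through the admission test $G_{*}\rightmoon_{\alpha_{1}}G$.

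The gap you flag in the propagation step is genuine, and it is not closed by the paper. When $H_{i+1}$ is rejected because some archived $\tilde{G}$ satisfies $\mathcal{V}_{\tilde{G}}(H_{i+1})>1-\alpha_{2}$, the algorithm never places $H_{i+1}$ on the frontier, so the chain toward $H_{i+2}$ must continue from $\tilde{G}$; but $\mathcal{V}_{\tilde{G}}(H_{i+1})>1-\alpha_{2}$ together with $\mathcal{V}_{H_{i+2}}(H_{i+1})\geq\alpha$ gives no lower bound on $\mathcal{V}_{H_{i+2}}(\tilde{G})$, since the transfer value is neither symmetric nor transitive (Remark \ref{rem:transfer-value}), and nothing in Assumptions \ref{assu:alpha-connectedness} or \ref{assu:boundedness} supplies the needed near-transitivity along $\alpha$-connected chains. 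Taking $\alpha$ small does not by itself repair this: it weakens the admission threshold but does not make a covering game inherit the onward connectivity of the game it replaces. (In the toy model of Section \ref{subsec:toy-model} the min-overlap structure does give such an estimate, which is presumably the intended intuition, but the claim is stated for the abstract assumptions.) Your second reservation is equally real: the conclusion requires a fairness hypothesis on the meta-sampling process $\Lambda$ --- that every $\alpha_{1}$-upstream neighbour of a frontier game is eventually proposed --- which the paper explicitly leaves as an unspecified implementation choice. A complete proof therefore needs two additional hypotheses (near-transitivity of $\mathcal{V}$ along the relevant chains, or a replacement-inherits-connectivity condition; and exhaustive sampling by $\Lambda$); your write-up is valuable precisely because it isolates them.
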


\begin{rem}
While we do not have explicit fairness guarantees, Algorithm \ref{alg:scope-growth-algorithm}
tends to avoid the oversampling of any region (as any new game must
avoid being $\left(1-\alpha\right)$-covered by the previous games),
thus limiting the oversampling risk. 
\end{rem}

\subsubsection{\label{subsec:meta-sampling-process}Meta-Sampling Process}

It must be noted that while the scope growth algorithm presented above
is very simple in structure, much of the implementation is left in
the choice of the meta-sampling process $\Lambda$. 

The objective of $\Lambda$ is quite clear: to provide the largest
possible sample  of games that allows the loop of the scope growth
algorithm to run for as long as possible, in a way that maximally
exhausts games. Given the structure of the Xent Games, the most useful
practical implementations of such an algorithm involve sampling random
codes in the XGL language from a dedicated model, trained to maximize
the sampling quality. As discussed in Section \ref{subsec:xgl-design-and-features},
the Xent Game structure and the XGL are designed from the beginning
with this goal in mind. 

\subsubsection{\label{subsec:numerical-simulation-exploration-in-toy-model}Numerical
Simulation: Exploration in Toy Model}

Taking the toy model defined in Section \ref{subsec:toy-model}, we
can illustrate the scope growth algorithm in a naive 2D setting (representing
games as uniform random positive points under a curve $\sqrt{x}+\sqrt{y}=1$),
by taking the meta-sampling $\Lambda$ to give the list of points
within a small neighborhood of radius $\rho$ of the game $G$. As
expected, the algorithm progressively crawls towards the Pareto efficiency
region, and ultimately covers it completely. 

\begin{figure}
\includegraphics[scale=0.2]{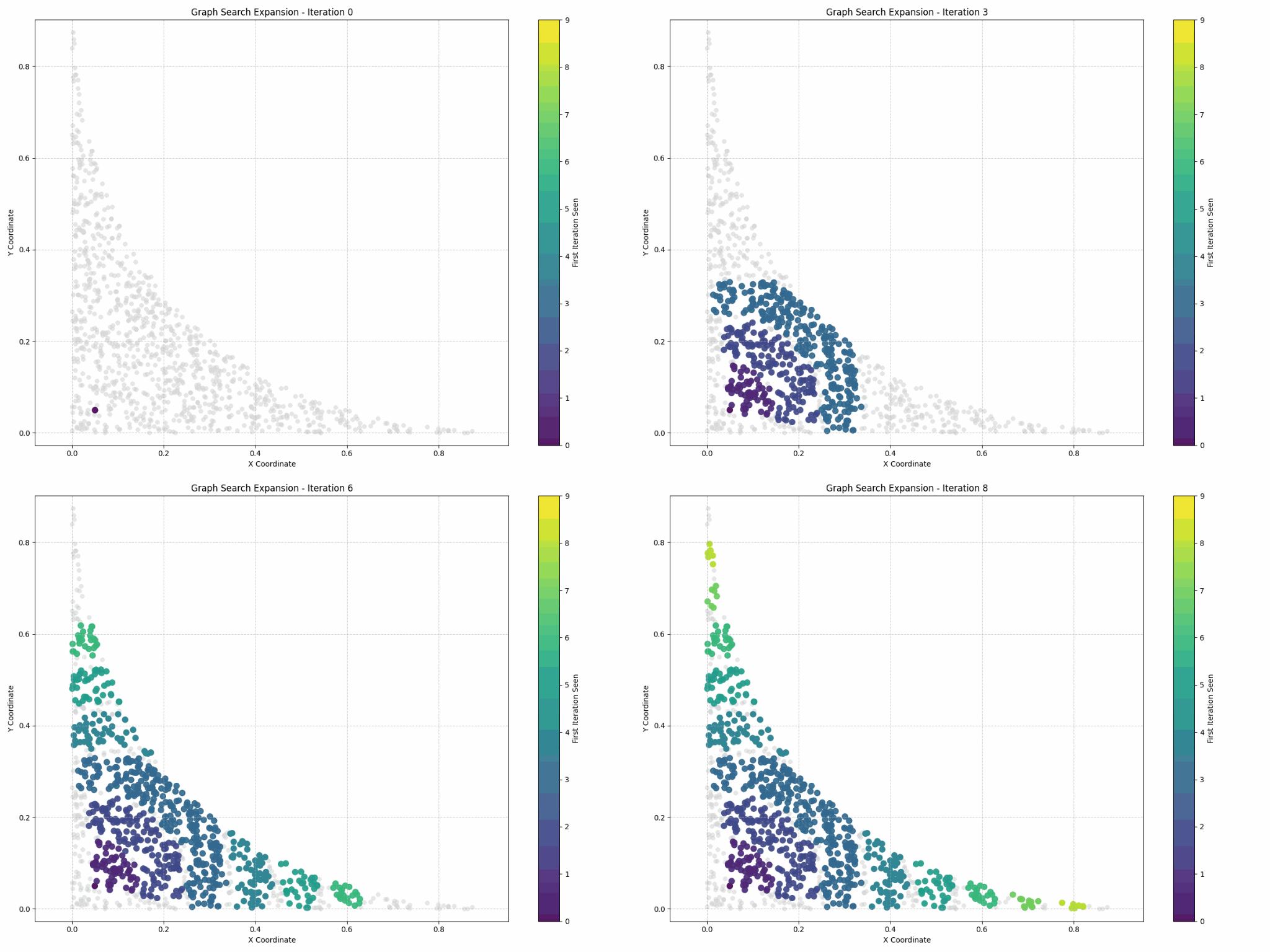}

\caption{Scope growth algorithm exploration steps $2D$ with $\alpha_{1}=0.9$,
$\alpha_{2}=0.05$, $\rho=0.1$. }

\end{figure}

\section{\label{sec:summary-discussion-and-perspectives}Summary, Discussion
and Perspectives}

\subsection{\label{subsec:summary}Summary}

Informally speaking, this paper follows a path from implicit knowledge
to general capability measures:
\begin{itemize}
\item We start from the question of implicit knowledge, and formulate a
number of concrete tasks associated with it.
\item We propose to approach these tasks using a common game-theoretic framework,
and ask the question of what is a good space of games to work with.
\item We introduce Cross-Entropy Games (Xent Games) as an answer to the
previous item, characterize this space theoretically and provide examples
of tasks covered by it.
\item We investigate the question of what it means for an LLM to play Xent
Games, propose evaluation procedures, and introduce notions of playability
and of transfer value. 
\item We propose to evaluate LLMs by making them play Xent Games, propose
a score normalization scheme for playable games, and, given a scope
(a set of playable games), propose to construct Xent Measures by considering
optimal covers in terms of transfer value. 
\item We propose to address the challenge of unbounded scope associated
with measuring general capabilities (as opposed to limiting to a finite
scope) by relying on evolutionary dynamic principles, leading us to
considerations close in philosophy to Quality-Diversity ideas.
\item Motivated by universality considerations, we propose a simple greedy
scope growth meta-algorithm as a general capability measure. 
\end{itemize}

\subsection{\label{subsec:discussion}Discussion}

A number of important points in the approach above deserve some discussion:

\subsubsection{\label{subsec:role-of-judge}Role of Judge Model}

Throughout the paper, the judge model $\mathcal{J}$ is taken as a
given. A central point is that for a sufficiently strong judge model,
the space of implicit knowledge questions becomes large enough so
that games can be played `ad infinitum' (this is similar to the fact
that once one knows the rules of chess, one can keep improving at
the game for a very long time). Still, improving the abilities of
$\mathcal{J}$ or (perhaps counter-intuitively) allowing $\mathcal{J}$
to be `controllably weak' (e.g. with certain prefix tokens) can lead
to more sophisticated games. Ultimately, the role of $\mathcal{J}$
is to provide an \emph{environment} where abilities ought to be developed.
As such, having high-quality data from the `external world' contained
in the training of the $\mathcal{J}$ model is fundamental, as the
$\mathcal{J}$ model is the only connection between the external world
and an agent playing games.

\subsubsection{\label{subsec:units-of-measure}Units of Measure}

An important theme of Sections \ref{sec:xent-game-space-model-play-properties},
\ref{sec:xent-game-measures}, and \ref{sec:evolution-in-game-space}
is to provide a clean interpretation of the quantities computed, in
particular specifying good \emph{units} of measure. While the raw
Xent Games values are measured in bits (which carries some useful
problem-specific information), the measures shift towards relative
units of work for the score normalizations with respect to a base
model (comparing number of attempts) and the transfer value (such
values are normalized by comparing token processing counts). 

Ultimately, this leads to a simple perspective on general capabilities
which is specified in terms of units of work spared compared to specific
replaying and training, assuming a common `reality' defined by a model
$\mathcal{J}$. As such, this provides a view of general capabilities
that is continuous and does not contain any explicit thresholds for
capabilities. 

\subsubsection{Connections}

Ultimately, our construction relies upon a combination of ideas which
appear in information theory, compositional game theory, transfer
learning, search, benchmarking, and open-ended evolution. While some
combinations of the above ideas are definitely well-studied, it is
interesting to note that, to the best of our knowledge, the connection
between compositional game theory and transfer learning has not been
studied at all; it is also interesting that the connection between
open-ended evolution and general intelligence has seen a recent surge
in interest \cite{aa-team,agent-gym,zhang-hu-lu-lange-clune}, but
that no connection with the question of measuring intelligence has
been provided yet. 

\subsection{\label{subsec:perspectives}Perspectives}

Building the foundations of the universality ideas (Section \ref{subsec:universality-do-details-matter})
appears to be one of the most interesting theoretical challenges associated
with our results: while usually the evolutionary ideas are often proposed
as a very imperfect means towards an end, it is possible that as far
as building general capability measures via meta-games, the imperfection
disappears for game-theoretic reasons.

\subsubsection{Implementation}

The most enticing practical challenge associated with the vision proposed
in this paper is its implementation at scale: this involves in particular
the determination and study of families of Xent Measures generated
according to the ideas proposed in Section \ref{sec:evolution-in-game-space},
including, in particular, efficient means to perform the game meta-sampling
and hyperparameter selection (e.g. the $\alpha$ values). 

In parallel, an interesting question is to investigate the potential
of hand-crafted Xent Measures (i.e. for a finite set of hand-crafted
games) as restricted benchmarks: in spite of their inherently narrow
scopes, these already satisfy a number of interesting properties (in
particular, of not relying on a private dataset, and being generally
hard to overfit on) that make them promising contenders to measure
the abilities of models in a trusted and credible fashion. 

\subsubsection{Connection with Human-Centric Tasks}

While the questions raised in Section \ref{subsec:implicit-capabilities}
to motivate the investigation of the implicit knowledge have led us
to games that are often directly inspired by an attempt to answer
these questions (as illustrated in Section \ref{subsec:xega-space-characterization}),
to make the \emph{solutions to the games} directly relevant to human-centric
questions (e.g. to make good solutions of reverse prompting game \ref{exa:reverse-prompt}
useful as human-directed summaries) likely requires some engineering
(e.g. some appropriate pre-prompting or fine-tuning of the judge model
focus on certain modes of expression, adding constraints, etc.). 

From a capability measure perspective, by the very design of Xent
game space (and the transfer learning considerations), it is reasonable
to expect that models that display competence at Xent games will be
capable to find solutions to their human-centric counterparts; at
the same time, formulating precisely what the latter are is an important
question. 

\subsubsection{\label{subsec:synthetic-data}Synthetic Data}

An important question in the field of Language Models is that of synthetic
data (see e.g. \cite{data-augmentation,best-practices-synth-data})
for LLM training. An interesting question is to determine how much
value can be learned by pre-training LLMs on synthetic data coming
from gameplay of Xent Games. The long-context window and other challenging
gameplay aspects could in principle lead pre-trained models to learn
deeper and more nuanced representations of information, enabling them
to capture subtle notions associated with implicit knowledge tasks.

\subsubsection{\label{subsec:curriculum-learning}Curriculum Learning}

An interesting direction is the construction of curriculum-learning
algorithms: the approach we have proposed so far presupposes the presence
of agents whose abilities are to be measured, but leaves aside the
question of how such abilities ought to be discovered. A promising
path is to leverage evolutionary ideas such as the ones outlined in
the present paper to build such curricula, i.e. to identify the most
relevant games to learn to play in order to develop general abilities. 

\subsubsection{Putting Things Together}

All in all, in the setting of Xent Games, we have LLMs play several
distinct roles. Besides the base models, used for normalization, we
have LLMs:
\begin{itemize}
\item As judge models
\item As models for the NPC players in the games
\item As models to generate the game maps
\item As meta-sampling models to generate games
\end{itemize}
In principle, these models can all be the same, and they can evolve
over time. If one leverages Xent Games as means to train new models
(as suggested in Sections \ref{subsec:synthetic-data} and \ref{subsec:curriculum-learning}
above), then this can lead to a self-improvement loop of LLMs, which
is an interesting alley to pursue: this is one way in which evolutionary
ideas could (in principle) lead to an augmentation of capabilities
for LLMs in new, open-ended dimensions.

\end{document}